\newcommand{\bm}[1]{\mbox{\boldmath{$#1$}}}
\theoremstyle{definition}
\newtheorem{definition}{Definition}[section]
\newtheorem{theorem}{Theorem}[section]
\DeclareMathOperator*{\argminA}{arg\,min}
\title{CBP: Backpropagation with constraint on weight precision using pseudo-Lagrange multiplier method}
\author{%
  Guhyun~Kim, Doo~Seok~Jeong\thanks{Corresponding author}\\
  Division of Materials Science and Engineering\\
  Hanyang University, Republic of Korea\\
  \texttt{guhyunkim01@gmail.com, dooseokj@hanyang.ac.kr}
}
\begin{document}

\maketitle

\begin{abstract}
  Backward propagation of errors (backpropagation) is a method to minimize objective functions (e.g., loss functions) of deep neural networks by identifying optimal sets of weights and biases. Imposing constraints on weight precision is often required to alleviate prohibitive workloads on hardware. Despite the remarkable success of backpropagation, the algorithm itself is not capable of considering such constraints unless additional algorithms are applied simultaneously. To address this issue, we propose the constrained backpropagation (CBP) algorithm based on the pseudo-Lagrange multiplier method to obtain the optimal set of weights that satisfy a given set of constraints. The defining characteristic of the proposed CBP algorithm is the utilization of a Lagrangian function (loss function plus constraint function) as its objective function. We considered various types of constraints — binary, ternary, one-bit shift, and two-bit shift weight constraints. As a post-training method, CBP applied to AlexNet, ResNet-18, ResNet-50, and GoogLeNet on ImageNet, which were pre-trained using the conventional backpropagation. For most cases, the proposed algorithm outperforms the state-of-the-art methods on ImageNet, e.g., 66.6\%, 74.4\%, and 64.0\% top-1 accuracy for ResNet-18, ResNet-50, and GoogLeNet with binary weights, respectively. This highlights CBP as a learning algorithm to address diverse constraints with the minimal performance loss by employing appropriate constraint functions. The code for CBP is publicly available at \url{https://github.com/dooseokjeong/CBP}. 
\end{abstract}

\section{Introduction}
Currently, deep learning-based methods are applied in a variety of tasks, including the classification of static data, e.g., image recognition~\cite{Taigman2014DeepFace, Krizhevsky2012ImageNet}; classification of dynamic data, e.g., speech recognition~\cite{Hinton2012Deep, Sainath2013Deep, Dahl2012Context, Hochreiter1997Long}; function approximations, which require the output of precise predictions, e.g., electronic structure predictions~\cite{Lee2019SIMPLE} and nonlinear circuit predictions~\cite{Kim2019Artificial}. All of the aforementioned tasks require discriminative models. Additionally, generative models, including generative adversarial networks~\cite{Goodfellow2014Generative} and variants~\cite{Radford2015Unsupervised, Metz2016Unrolled, Chen2016InfoGAN, Arjovsky2017Wasserstein}, comprise another type of deep neural network. Despite the diversity in application domain and model type used, almost all deep learning-based methods use backpropagation as a common learning algorithm.

Recent developments in deep learning have primarily focused on increasing the size and depth of deep neural networks (DNNs) to improve their learning capabilities as in the case of state-of-the-art DNNs like VGGNet~\cite{Simonyan2014Very} and ResNet~\cite{He2016Deep}. Given that the memory capacity required by a DNN is proportional to the number of parameters (weights and biases), memory usage for DNN becomes severe. Additionally, a significant number of multiply-accumulate operations during the training and inference stages impose prohibitive workload on hardware. Thus, efficient hardware-resource consumption is critical to the optimal performance of deep learning. One way to address this requirement is the use of weights of limited precision, such as binary~\cite{Courbariaux2015BinaryConnect,Rastegari2016XNOR} and ternary weights~\cite{Lin2015Neural,Li2016Ternary}. To this end, particular constraints are applied to weights during training, and additional algorithms for weight quantization are used in conjunction with backpropagation. This is because such constraints are not considered during the minimization of the objective function (loss function) when backpropagation is executed.

We adopt the Lagrange multiplier method (LMM) to combine basic backpropagation with additional constraint algorithms and produce a single constrained backpropagation (CBP) algorithm. We refer to the adopted method as pseudo-LMM because the constraint functions $cs\left(\bm{x}\right)$ are nondifferentiable at $\bm{x}_m \left(=\argminA_{\bm{x}} cs\left(\bm{x}\right)\right)$, rendering LMM inapplicable. Nevertheless, pseudo-LMM successfully attains the optimal point under particular conditions as for LMM. 
In the CBP algorithm, the optimal weights satisfying a given set of constraints are evaluated via a basic backpropagation algorithm. It is implemented by simply replacing the conventional objective function (loss function) with a Lagrangian function $\mathcal{L}$ that comprises the loss and constraint functions as sub-functions that are subjected to simultaneous minimization. Therefore, this method is perfectly compatible with conventional deep learning frameworks. The primary contributions of this study are as follows.

\begin{itemize}

\item We introduce a novel and simple method to incorporate given constraints into backpropagation by using a Lagrangian function as the objective function. The proposed method is able to address any set of constraints on the weights insomuch as the constraint functions are mathematically well-defined.

\item We introduce pseudo-LMM with constraint functions $cs\left(w\right)$ that are nondifferentiable at $w_m \left(=\argminA_{w}cs\left(w\right)\right)$ and analyze the kinetics of pseudo-LMM in the continuous time domain.

\item We introduce optimal (sawtooth-like) constraint functions with gradually vanishing unconstrained-weight windows and provide a guiding principle for the stable co-optimization of weights and Lagrange multipliers in a quasi-static fashion.

\item We evaluate the performance of CBP applied to AlexNet, ResNet-18, ResNet-50, and GoogLeNet (pre-trained using backpropagation with full-precision weights) with four different constraints (binary, ternary, one-bit shift, and two-bit shift weight constraints) on ImageNet as proof-of-concept examples. The results highlight the classification accuracy outperforming the previous state-of-the-art results.
\end{itemize}

\section{Related work}
The simplest approach to weight quantization is the quantization of pre-trained weights. Gong et al.~\cite{Gong2014Compressing} proposed several methods for weight quantization and demonstrated that binarizing weights using a sign function degraded the top-1 accuracy on ImageNet by less than $10\%$. Mellempudi et al.~\cite{Mellempudi2017Ternary} proposed a fine-grained quantization algorithm that calculates the optimal thresholds for the ternarization of pre-trained weights. The expectation backpropagation algorithm~\cite{Soudry2014Expectation} implements a variational Bayesian approach to weight quantization. It uses binary weights and activations during the inference stage. Zhou et al.~\cite{zhou2017incremental} proposed the incremental network quantization method (INQ) that iteratively re-trains a group of weights to compensate for the performance loss caused by the rest of weights which are quantized using pre-set quantization thresholds.

Several methods of weight quantization utilize auxiliary real-valued weights in conjunction with quantized weights during training. The straight-through-estimator (STE) comprises the conduction of forward and backpropagation using quantized weights but relies on the auxiliary real-valued weights for the update of weights~\cite{Hinton2012Deep}. BinaryConnect~\cite{Courbariaux2015BinaryConnect} utilizes weights binarized by a sign function for forward and backpropagation, and the real-valued weights are updated via backpropagation with binary weights. The binary-weight-network (BWN)~\cite{Rastegari2016XNOR} identifies the binary weights closest to the real-valued weights using a scaling factor, and it exhibits a higher classification accuracy than BinaryConnect on ImageNet. The binarized neural nets~\cite{Courbariaux2016Binarized} and XNOR-Nets~\cite{Rastegari2016XNOR} are extensions of BinaryConnect and BWN, respectively, which utilize binary activations alongside binary weights. 

Lin et al.~\cite{Lin2015Neural} proposed TernaryConnect and Ternary-weight-network (TWN), which are similar to BinaryConnect and BWN but use weight-ternarization methods instead. Trained ternary quantization (TTQ) ~\cite{zhu2016trained} also uses ternary weights that are quantized using trainable thresholds for quantization. LQ-Nets proposed by Zhang et al.~\cite{zhang2018lq} utilize activation- and weight-quantizers considering the actual distributions of full-precision activation and weight, respectively. DeepShift Network~\cite{Elhoushi_2021_CVPR} includes the LinearShift and ConvShift operators that replace the multiplications of fully-connected layers and convolution layers, respectively. Qin et al.~\cite{qin2020forward} introduced IR-Nets that feature the use of error decay estimators to approximate sign functions for weight and activation binarization to differentiable forms. Gong et al.~\cite{Gong_2019_ICCV} and Yang et al.~\cite{Yang_2019_CVPR} employed functions similar to step functions but differentiable. Pouransari et al.~\cite{Pouransari_2020_CVPR_Workshops} proposed the least squares quantization method that searches proper scale factors for binary quantization. Elthakeb et al.~\cite{elthakeb2020waveq} attempted to quantize weights and activations by applying sinusoidal regularization. The regularization involves two hyperparameters that determine weight-quantization and bitwidth-regularization strengths.   

To take into account the constraint on weight precision, Leng et al.~\cite{Leng2018Extremely} used an augmented Lagrangian function as an objective function, which includes a constraint-agreement sub-function. The method was successfully applied to various DNNs on ImageNet; yet, the method failed to reach the accuracy level for full-precision models even when 3-bit precision was used.

The CBP algorithm proposed in this study also utilizes LMM; but CBP essentially differs from \cite{Leng2018Extremely} given that (i) the basic differential multiplier method (BDMM), rather than ADMM, is used to apply various constraints on weight precision, (ii) particularly designed constraint functions with gradually vanishing unconstrained-weight windows are used, and (iii) substantial improvement on the classification accuracy on ImageNet is achieved.

\section{Optimization method}
\subsection{Pseudo-Lagrange multiplier method}
LMM calculates the maximum or minimum value of a differentiable function $f$ under a differentiable constraint $cs=0$.~\cite{Bertsekas2014Constrained} Let us assume that a function $f\left(x,y\right)$ attains its minimum (or maximum) value $m$ satisfying a given constraint at $\left(x_m, y_m\right)$, i.e., $f\left(x_m, y_m\right)=m$. Further, $cs\left(x_m,y_m\right)=0$ as the constraint is satisfied at this point. In this case, the point of intersection between the graphs of the two functions, $f\left(x,y\right)=m$ and $cs\left(x,y\right)=0$, is $\left(x_m,y_m\right)$. Because the two functions have a common tangent at the point of intersection, the following equation holds:
\begin{equation}
    \nabla_{x,y}f=-\lambda \nabla_{x,y}cs,
    \label{lag1}
\end{equation}
at $\left(x_m,y_m\right)$, where $\lambda$ is a Lagrange multiplier~\cite{Luenberger2015Linear}.

The Lagrangian function is defined as  $\mathcal{L}\left(x,y,\lambda\right)=f\left(x,y\right)+\lambda cs\left(x,y\right)$. Consider a local point $\left(x,y\right)$ at which the gradient of $\mathcal{L}\left(x,y,\lambda\right)$ is zero. Therefore, $\nabla_{x,y,\lambda}\mathcal{L}\left(x,y,\lambda\right)=\nabla_{x,y}\left[f\left(x,y\right)+\lambda cs\left(x,y\right)\right]+\nabla_{\lambda}\left[\lambda cs\left(x,y\right)\right]$. Thus, $\nabla_{x,y,\lambda}\mathcal{L}\left(x,y,\lambda\right)=\bm{0}$ is equivalent to the following equations.
\begin{eqnarray}\label{lmm1}
\nabla_{x,y,\lambda}\mathcal{L}\left(x,y,\lambda\right)=\bm{0} & \Leftrightarrow & \left\{
\begin{array}{rl}
\nabla_{x,y}\left[f\left(x,y\right)+\lambda cs\left(x,y\right)\right] & = \bm{0},\\
\nabla_{\lambda}\left[\lambda cs\left(x,y\right)\right] & =  0 \\
\end{array}\right.\nonumber\\
& \Leftrightarrow & \left\{
\begin{array}{rl}
    \nabla_{x,y}f\left(x,y\right) & = -\lambda \nabla_{x,y} cs\left(x,y\right)\\
    cs\left(x,y\right) & = 0 
\end{array}\right.
\end{eqnarray}
This satisfies the condition in Eq. \eqref{lag1} as well as the constraint $cs\left(x,y\right)=0$. Therefore, the local point $\left(x,y\right)$ corresponds to the minimum (or maximum) point of the function $f$ under the constraint.

We define pseudo-LMM to address similar minimization tasks but with continuous constraint functions that are nondifferentiable at $\left(x_m, y_m\right) =\argminA_{x,y}cs\left(x,y\right)$. Thus, Eq. \eqref{lag1} cannot be satisfied at the optimal point. Nevertheless, pseudo-LMM enables us to \textit{minimize} the function $f\left(\bm{x}\right)$ subject to the constraint condition $cs\left(\bm{x}\right) = 0$ using the Lagrangian function $\mathcal{L}\left(\bm{x},\lambda\right)$. 
\theoremstyle{definition}
\begin{definition}[Pseudo-LMM]
Pseudo-LMM is a method to attain the optimal variables $\bm{x}_m$ that minimize function $f\left(\bm{x}\right)$ subject to the constraint condition $cs\left(\bm{x}\right) = 0$, where the function $cs\left(\bm{x}\right)$ is nondifferentiable at $\bm{x}_m$ but reaches the minimum at $\bm{x}_m$, i.e., $cs\left(\bm{x}_m\right)=0$.
\end{definition}
\begin{theorem}\label{theorem1}
Minimizing the Lagrangian function $\mathcal{L}\left(\bm{x},\lambda\right)$, which is given by $\mathcal{L}\left(\bm{x},\lambda\right) = f\left(\bm{x}\right) + \lambda cs\left(\bm{x}\right)$, is equivalent to minimizing the function $f\left(\bm{x}\right)$ subject to $cs\left(\bm{x}\right) = 0$
\end{theorem}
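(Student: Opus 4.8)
The plan is to exploit the special structure that the Definition imposes on $cs$: since $cs$ attains its global minimum value $0$ at the feasible points, we have $cs(\bm{x}) \geq 0$ for every $\bm{x}$, with equality exactly on the constraint set $\{\bm{x} : cs(\bm{x}) = 0\}$. This nonnegativity is what lets pseudo-LMM succeed where ordinary LMM cannot: rather than matching the gradients of $f$ and $cs$ along a smooth constraint surface (Eq.~\eqref{lag1}), we treat $\lambda\, cs$ as an \emph{exact penalty} that is activated only off the feasible set. First I would record this nonnegativity and note that it turns the equality constraint $cs(\bm{x}) = 0$ into the requirement that $\bm{x}$ be a global minimizer of $cs$.

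Next I would read ``minimizing $\mathcal{L}$'' as the saddle problem $\min_{\bm{x}} \max_{\lambda \geq 0} \mathcal{L}(\bm{x},\lambda)$, which is the static counterpart of the differential (BDMM-type) dynamics $\dot{\bm{x}} = -\nabla_{\bm{x}}\mathcal{L}$, $\dot{\lambda} = cs(\bm{x})$ analyzed later in the paper. Carrying out the inner maximization first, and using $cs(\bm{x}) \geq 0$, gives
\[
\max_{\lambda \geq 0}\; \bigl[\, f(\bm{x}) + \lambda\, cs(\bm{x}) \,\bigr] =
\begin{cases} f(\bm{x}) & \text{if } cs(\bm{x}) = 0,\\ +\infty & \text{if } cs(\bm{x}) > 0, \end{cases}
\]
so the $\lambda$-maximization behaves as a hard barrier on infeasible points. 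The outer minimization over $\bm{x}$ then collapses to $\min_{\bm{x} : cs(\bm{x}) = 0} f(\bm{x})$, which is exactly the constrained problem. This half of the argument never differentiates $cs$ in $\bm{x}$, so the nondifferentiability at $\bm{x}_m$ is harmless here; equivalently, stationarity of $\mathcal{L}$ in $\lambda$ reproduces $cs(\bm{x}) = 0$ just as in the second line of Eq.~\eqref{lmm1}.

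The delicate step --- and the main obstacle --- is showing that the co-optimization actually settles at a feasible minimizer with a \emph{finite} multiplier, since the gradient-matching identity $\nabla_{\bm{x}} f = -\lambda \nabla_{\bm{x}} cs$ is undefined at $\bm{x}_m$. Here I would replace gradients by one-sided directional derivatives. Because $cs$ has a genuine kink-minimum at $\bm{x}_m$, its directional derivative $D_{\bm{d}}\,cs(\bm{x}_m)$ is strictly positive in every direction $\bm{d}$; let $c_0 = \min_{\|\bm{d}\| = 1} D_{\bm{d}}\,cs(\bm{x}_m) > 0$. For the smooth loss, $D_{\bm{d}}\, f(\bm{x}_m) = \nabla f(\bm{x}_m)\cdot\bm{d} \geq -\|\nabla f(\bm{x}_m)\|$ on unit $\bm{d}$. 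Hence
\[
D_{\bm{d}}\,\mathcal{L}(\bm{x}_m) = D_{\bm{d}}\, f(\bm{x}_m) + \lambda\, D_{\bm{d}}\, cs(\bm{x}_m) \geq -\|\nabla f(\bm{x}_m)\| + \lambda c_0,
\]
which is nonnegative in all directions once $\lambda \geq \|\nabla f(\bm{x}_m)\| / c_0$. This shows $\bm{x}_m$ is a local minimizer of $\mathcal{L}(\cdot,\lambda)$ for every finite $\lambda$ above this threshold --- the exact-penalty phenomenon that smooth penalties lack --- and the two readings together yield the claimed equivalence.

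I would flag two points needing care to make this rigorous: (i) the kink must be \emph{strict} and locally convex, i.e. $c_0 > 0$, for otherwise a smooth tangency would force $\lambda \to \infty$ and exactness would break, and nonnegative directional derivatives would no longer certify a local minimum; and (ii) the argument is local around each $\bm{x}_m$, so for the full weight vector one sums the per-weight contributions and invokes separability of the constraint, checking that a single threshold of the form $\|\nabla f\|/c_0$ controls all coordinates simultaneously. Establishing (i) for the specific sawtooth-like constraint functions introduced later, and confirming that the BDMM trajectory does not stall at a nondifferentiable point off $\bm{x}_m$, is where the real work lies.
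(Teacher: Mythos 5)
Your proposal is correct, and its core coincides with the paper's (very brief) argument: both rest on the observation that optimizing $\mathcal{L}$ in $\lambda$ forces $cs(\bm{x})=0$, and that on the feasible set $\mathcal{L}$ reduces to $f$. The route differs, though. The paper only notes that $\mathcal{L}$ is differentiable (indeed linear) in $\lambda$, so stationarity in $\lambda$ yields $cs(\bm{x})=0$, and then concludes. You instead read the joint optimization as the saddle problem $\min_{\bm{x}}\max_{\lambda\ge 0}\mathcal{L}$ and use the nonnegativity $cs(\bm{x})\ge 0$ (which does follow from the Definition, since $cs$ attains its minimum value $0$ at $\bm{x}_m$) to turn the inner maximization into a hard barrier on infeasible points. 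This buys two things the paper's proof does not deliver: it makes precise why ``minimizing $\mathcal{L}(\bm{x},\lambda)$'' cannot be taken literally as a joint minimization (the function is linear in $\lambda$, hence unbounded below off the feasible set unless $\lambda$ is sign-restricted), and it matches the actual BDMM dynamics of Eqs.~\eqref{bdmm1}--\eqref{bdmm2}, which descend in $\bm{x}$ and \emph{ascend} in $\lambda$. Your third paragraph --- the one-sided directional-derivative bound $D_{\bm{d}}\mathcal{L}(\bm{x}_m)\ge -\|\nabla f(\bm{x}_m)\|+\lambda c_0$ showing that a \emph{finite} multiplier above the threshold $\|\nabla f(\bm{x}_m)\|/c_0$ makes $\bm{x}_m$ a local minimizer of $\mathcal{L}(\cdot,\lambda)$ --- is genuinely beyond what the paper proves here; the paper defers all quantitative discussion of the multiplier's limiting value to the learning-kinetics section (Eqs.~\eqref{multiplier_equil}--\eqref{loss_equil}), where $\lambda_i^{*cs}$ plays exactly the role of your threshold. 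The caveats you flag (strictness of the kink so that $c_0>0$, and separability across coordinates) are the right ones, and for the sawtooth functions of Eq.~\eqref{cs_y} one has $c_0=2$ at every $q_i$, so the exact-penalty property does hold.
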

\begin{proof}
The Lagrangian function $\mathcal{L}\left(\bm{x},\lambda\right)$ is always differentiable with respect to the Lagrangian multiplier $\lambda$, so that the equation $cs\left(\bm{x}\right) = 0$ holds at the optimal point $\bm{x}_m$. Thus, we have
\begin{equation}\label{plmm1}
\underset{\bm{x}, \lambda}{\text{minimize }} \mathcal{L}\left(\bm{x},\lambda\right) \Leftrightarrow \left\{
\begin{aligned}
& \underset{\bm{x}}{\text{minimize}}
& & \mathcal{L}(\bm{x};\lambda) \\
& \text{subject to}
& & cs\left(\bm{x}\right) = 0.
\end{aligned}\right.\\
\end{equation}
When the constraint is satisfied, i.e., $cs\left(\bm{x}\right)=0$, the Lagrangian function $\mathcal{L}\left(\bm{x};\lambda\right)$ equals the function $f\left(\bm{x}\right)$, so that the task to minimize $\mathcal{L}\left(\bm{x},\lambda\right)$ with respect to $\bm{x}$ and $\lambda$ corresponds to the task to minimize $f\left(\bm{x}\right)$ subject to $cs\left(\bm{x}\right)=0$.
\end{proof}
Given Theorem {\ref{theorem1}}, pseudo-LMM can attain the optimal point by minimizing the Lagrangian function $\mathcal{L}$ in spite of the nondifferentiability of the constraint function $cs\left(\bm{x}\right)=0$ at the optimal point $\bm{x}_m$. Note that not all functions have zero gradients at their minimum points; for instance, the function $y=\left|x\right|$ attains its minimum at $x=0$ but the gradient at the minimum point is not defined. However, all convex functions have zero gradients at their minimum points. Thus, LMM to minimize the function $f\left(\bm{x}\right)$ subject to the constraint convex function $cs\left(\bm{x}\right)$ with minimum point $\bm{x}_m$ is a subset of pseudo-LMM. In this case, the constraint function has zero gradient at the minimum point, so that Eq. \eqref{lmm1} becomes
\begin{eqnarray}\label{lmm2}
\nabla_{\bm{x},\lambda}\mathcal{L}\left(\bm{x},\lambda\right)=\bm{0} & \Leftrightarrow & \left\{
\begin{array}{rl}
    \nabla_{\bm{x}}f\left(\bm{x}\right) & = 0  \\
    \text{subject to }cs\left(\bm{x}\right) & = 0. 
\end{array}\right.
\end{eqnarray}

We will consider \textit{continuous} constraint functions with a few nondifferentiable points in their variable domains, including their minimum points. Other than such nondifferentiable points, we will use the gradient descent method to search for the minimum points within the framework of pseudo-LMM. Hereafter, when the gradient of the Lagrangian $\mathcal{L}$ function is remarked, its variable domain excludes such nondifferentiable points.

The optimal solution to Eq. \eqref{lmm2} can be found using the basic differential multiplier method (BDMM)~\cite{Platt1987Constrained} that calculates the point at which $\mathcal{L}\left(\bm{x},\bm{\lambda}\right)$ attains its minimum value by driving $\bm{x}$ toward the constraint subspace $\bar{x}$ $\left(\bm{cs}\left(\bar{x}\right)=0\right)$. The BDMM updates $\bm{x}$ and $\bm{\lambda}$ according to the following relations.
\begin{equation}
    \bm{x} \leftarrow \bm{x} - \eta_x\nabla_{\bm{x}}\mathcal{L}\left(\bm{x},\bm{\lambda}\right),
    \label{bdmm1}
\end{equation}
and
\begin{equation}
    \bm{\lambda} \leftarrow \bm{\lambda} + \eta_{\lambda}\nabla_{\bm{\lambda}}\mathcal{L}\left(\bm{x},\bm{\lambda}\right).
    \label{bdmm2}
\end{equation}
BDMM is cheaper than Newton's method in terms of computational cost. Additionally, Eq. \eqref{bdmm1} is identical to the solution used in the gradient descent method during backpropagation, except for the use of a Lagrangian function instead of a loss function. This indicates the compatibility of pseudo-LMM with the optimization framework based on backpropagation.

\subsection{Constrained backpropagation using the pseudo-Lagrange multiplier method}
We utilize pseudo-LMM to train DNNs with particular sets of weight-constraints. We define a Lagrangian function $\mathcal{L}$ in the context of feedforward DNN using the following relation.
\begin{eqnarray}
    \mathcal{L}\left(\bm{y}^{(k)}, \hat{\bm{y}}^{(k)};\bm{W},\bm{\lambda}\right)&=&C\left(\bm{y}^{(k)}, \hat{\bm{y}}^{(k)};\bm{W}\right)+\bm{\lambda}^{\rm T}\bm{cs}\left(\bm{W}\right),\\\nonumber
    \bm{cs}&=&\left[cs_1\left(w_1\right),\ldots, cs_{n_w}\left(w_{n_w}\right)\right]^{\rm T}\\\nonumber
    \bm{\lambda}&=&\left[\lambda_1,\ldots, \lambda_{n_w}\right]^{\rm T}.
    \label{lf}
\end{eqnarray}
where $\bm{y}^{(k)}$ and $\hat{\bm{y}}^{(k)}$ denote the actual output vector for the $k$th input data and correct label, respectively. The set $\bm{W}$ denotes a set of weight matrices, including $n_w$ weights in aggregate, and the function $C$ denotes a loss function. 
Each weight $w_i$ $\left(1 \leq i \leq n_w \right)$ is given one constraint function $cs_i\left(w_i\right)$ and one multiplier $\lambda_i$.

We chose sawtooth-shaped constraint functions. We quantize the real-valued weights into $n_q$ values in the set $\bm{Q}=\left\{q_i\right\}_{i=1}^{n_q}$, where $q_i < q_{i+1}$ for all $i$. We also employ a set of the medians of neighboring values in the set $\bm{Q}$: $\bm{M}=\left\{m_i\right\}_{i=1}^{n_q-1}$, where $m_i = \left(q_i+q_{i+1}\right)/2$. Using $\bm{Q}$ and $\bm{M}$, we define a partial constraint function $y_i$ for $i=0$, $1 \leq i < n_q$, and $i=n_q$,    
\begin{eqnarray}\label{cs_y}
    y_0\left(w\right)&=&\left\{
    \begin{array}{cl}
        -2\left(w-q_1\right) & \text{if} \quad w < q_1,\\
        0 & \quad \text{otherwise},
    \end{array}\right.\nonumber\\ 
    y_i\left(w\right)&=&\left\{
    \begin{array}{cl}
        -2\left|w-m_i\right|+q_{i+1}-q_i & \text{if} \quad q_i \leq w < q_{i+1},\\
        0 & \text{otherwise},
    \end{array}\right.\nonumber\\
    y_{n_q}\left(w\right)&=&\left\{
    \begin{array}{cl}
        2\left(w-q_{n_q}\right) & \text{if} \quad w \geq q_{n_q},\\
        0 & \quad \text{otherwise},
    \end{array}\right. 
\end{eqnarray}
respectively. The constraint function $cs$ is the summation of the partial constraint functions $Y\left(w\right)=\sum_{i=0}^{n_q}y_i\left(w\right)$, gated by the unconstrained-weight window $ucs\left(w\right)$ parameterized by a variable $g$.   
\begin{equation}\label{csf}
    cs\left(w; \bm{Q},\bm{M}, g\right)=ucs\left(w\right)Y\left(w\right),
\end{equation}
where
\begin{equation}\label{ucs1}
    ucs\left(w\right)=1-\sum_{i=0}^{n_q-1}H\left(\dfrac{1}{2g}\left(q_{i+1}-q_i\right)-\left|w-m_i+\epsilon\right|\right), 
\end{equation}
where $\epsilon\rightarrow0^+$, and $H$ denotes the Heaviside step function. The function $ucs\left(w\right)$ realizes the unconstrained-weight window as a function of $g \left(\geq 1\right)$. When $g = 1$, the function outputs zero for $q_1 \leq w < q_{n_q}$, merely confining $w$ to the range $q_1 \leq w < q_{n_q}$ without weight quantization, whereas, when $g\rightarrow\infty$, the window vanishes, allowing the constraint function to quantize the weight in the entire weight range. Examples of function $ucs\left(w\right)$ are shown in Fig.~\ref{fig1}. 
\begin{figure}\centering
    \includegraphics[width=6in]{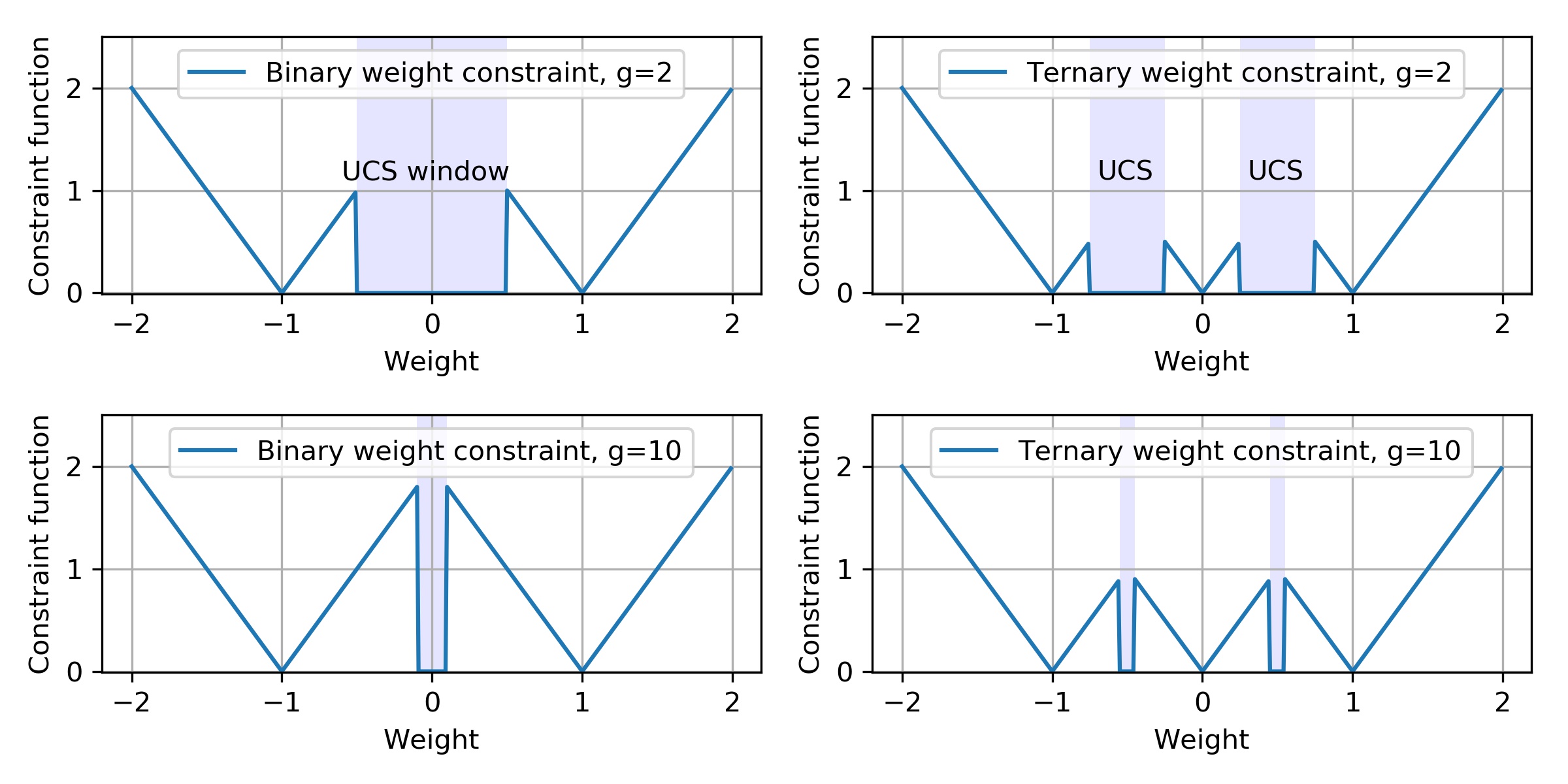}
    \caption{\label{fig1} Binary- and ternary-weight constraint functions for $g$ = 2 and 10. Blue-filled regions indicate unconstrained-weight windows.}
\end{figure}

The unconstrained-weight window variable $g$ is initially set to one and updated such that it keeps increasing during training, i.e., the window gradually vanishes. The window gradually vanishing allows sequential weight quantization such that the further the initial weights from their nearest $q_i$, the later their weights are subject to quantization, which is otherwise subject to simultaneous (abrupt) quantization. It is likely that the further the initial weights from their nearest $q_i$, the larger the increase in loss function $C$ when they are quantized. Thus, the sequential quantization from the weights close to their $q_i$ likely avoids an abrupt increase in the loss. Further, while the closer weights are being quantized, the further weights (not subject to quantization yet) are being updated to reduce the loss given the partially quantized weights. This effect will be discussed in Sections \ref{sec:kinetics} and \ref{discussion}. 

For every training batch, the weights are updated following a method similar to conventional backpropagation. Nevertheless, the use of the Lagrangian function in Eq.~\eqref{lf}, rather than a loss function only, as an objective function constitutes a critical difference. The Lagrange multipliers $\bm{\lambda}$ are subsequently updated using the gradient ascent method in Eq.~\eqref{bdmm2}. Updating cross-coupled variables, such as $\bm{W}$ and $\bm{\lambda}$, often experiences difficulties in convergence toward the optimal values because of oscillation around the optimal values. A feasible solution involves quasi-static update. To this end, we significantly reduce the update frequency of the Lagrange multipliers $\bm{\lambda}$ compared with weights $\bm{W}$. 

\textbf{Weight update:} Weights $\bm{W}$ are updated once every iteration as for the conventional backpropagation but using the Lagrangian function $\mathcal{L}$.

\textbf{Lagrange multiplier update:} Lagrange multipliers $\bm{\lambda}$ are conditionally updated once every training epoch. The update is allowed if the summation of all $\mathcal{L}$ in a given epoch $\left(\mathcal{L}_{sum}\right)$ is not smaller than $\mathcal{L}_{sum}$ for the previous epoch $\left(\mathcal{L}_{sum}^{pre}\right)$ or the multipliers $\bm{\lambda}$ have not been updated in the past $p_{max}$ epochs. This achieves the convergence of $\bm{W}$ for a given $\bm{\lambda}$ in a quasi-static manner. 

\textbf{Unconstrained weight window update:} Unconstrained-weight window variable $\bm{g}$ is updated on the same condition as for the Lagrange multipliers $\bm{\lambda}$. Unlike weights $\bm{W}$ and multipliers $\bm{\lambda}$, the variable $\bm{g}$ (initialized to one) constantly increases when updated such that $\Delta g = 1$ when $g < 10$, $\Delta g = 10$ for $10\leq g<100$, and $\Delta g = 100$ otherwise.   

The detailed learning algorithm is shown in the pseudocode in Appendix~\ref{app:pseudocode}.

\subsection{Learning kinetics}\label{sec:kinetics}
Learning with BDMM using the Lagrangian function is better understood in the continuous time domain. We first address the kinetics of learning without the unconstrained-weight window $ucs\left(w\right)$. The change in the Lagrangian function $\mathcal{L}$ at a given learning step in the discrete time domain is equivalent to the derivative of $\mathcal{L}$ with time at time $t$ in the continuous time domain, which is given by
\begin{equation}\label{lyapunov3}
    \dfrac{d\mathcal{L}}{dt} = -\tau_{W}^{-1}\sum_{i=0}^{n_w}\left(\dfrac{\partial C}{\partial w_i}+\lambda_i\dfrac{\partial cs_i}{\partial w_i}\right)^2 + \tau_{\lambda}^{-1}\sum_{i=0}^{n_w}cs_i^2.
\end{equation}
The Lagrange multiplier $\lambda_i$ at time $t$ is given by
\begin{equation}\label{multiplier}
    \lambda_i\left(t\right)=\lambda_i\left(0\right)+\tau_{\lambda}^{-1}\int_0^t cs_i dt.
\end{equation}
Eqs.~\eqref{lyapunov3} and \eqref{multiplier} are derived in Appendix~\ref{kinetics_general}. 
The constraint functions $cs_i$ approach zero as the weights approach their corresponding quantized values $q_i$, and thus the Lagrange multipliers in Eq. \eqref{multiplier} asymptotically converge to their limits.

At equilibrium, the Lagrange function is no longer time-dependent, i.e., $d\mathcal{L}/dt=0$. This requires the Lagrange multipliers reaching their limits, which in turn requires the weights reaching their corresponding quantized values $\bm{W}^{*cs}$, leading to $\bm{cs}=\bm{0}$. For convenience, we define the integration of $cs_i$ in Eq. \eqref{multiplier} as $\Delta cs_i\left(\geq 0\right)$. 
\begin{equation}\label{dcs}
    \Delta cs_i=\int_0^t cs_i dt, \quad \text{if }cs_i\rightarrow 0\text{ as } t\rightarrow\infty.
\end{equation}
Thus, the equilibrium Lagrange multiplier $\lambda_i^{*cs}$ can be expressed as
\begin{equation}\label{multiplier_equil}
    \lambda_i^{*cs}=\lambda_i\left(0\right)+\tau_{\lambda}^{-1}\Delta cs_i.
\end{equation}
Therefore, it is evident from Eq. \eqref{lyapunov3} that the equilibrium leads to 
\begin{equation}
    \forall i, \dfrac{\partial C}{\partial w_i}=-\left(\lambda_i\left(0\right)+\tau_{\lambda}^{-1}\Delta cs_i\right)\dfrac{\partial cs_i}{\partial w_i}.
\end{equation}
We consider sawtooth constraint functions with slopes $\pm s$, i.e., $\partial cs_i/\partial w_i=\pm s$, where $s>0$. Eq.~\eqref{cs_y} is the case of $s=2$. Generally, the Lagrange multiplier is initialized to zero, i.e., $\lambda_i\left(0\right)=0$. Therefore, the gradient of loss function $C$ at the equilibrium point $\bm{W}^{*cs}$ is given by
\begin{equation}\label{loss_equil}
    \forall i, \dfrac{\partial C}{\partial w_i}=\pm\tau_{\lambda}^{-1}s\Delta cs_i=\pm\lambda_i^{*cs}s.
\end{equation}
Consider that the loss function $C$ has the equilibrium point $\bm{W}^*\left(=\argminA_{\bm{W}} C\right)$. Eq. \eqref{loss_equil} elucidates the increase of loss by attaining the equilibrium point $\bm{W}^{*cs}$. In this regard, $\lambda_i^{*cs}$ corresponds to the cost of weight quantization. Assuming the convexity of the loss function $C$ in a domain $D$ including $\bm{W}^{*}$ and $\bm{W}^{*cs}$, $C\left(\bm{W}^{*cs}\right)$ keeps increasing as $\lambda_i^{*cs}$ increases. If the initial pre-trained weights equal their corresponding quantized weights, i.e., $w_i^*=w_i^{*cs}$, then $\Delta cs_i=0$, and thus $\lambda_i^{*cs}=0$ according to Eq. \eqref{multiplier_equil}. Eq. \eqref{loss_equil} consequently yields $\partial C/\partial w_i=0$, indicating zero cost of quantization.

Considering the gradually vanishing unconstrained-weight window $ucs\left(w\right)$ yields the derivative of $\mathcal{L}$ with time at time $t$ in the continuous time domain as follows.
\begin{equation}\label{lyapunov_mod3}
    \dfrac{d\mathcal{L}}{dt} = -\tau_{W}^{-1}\sum_{i=0}^{n_w}\left(\dfrac{\partial C}{\partial w_i}+\lambda_iucs_i\dfrac{\partial Y_i}{\partial w_i}\right)^2 + \tau_{\lambda}^{-1}\sum_{i=0}^{n_w}\left(ucs_iY_i\right)^2.
\end{equation}
The derivation of Eq.~\eqref{lyapunov_mod3} is given in Appendix~\ref{kinetics}. 
Distinguishing the weights in the unconstrained-weight window $D_{ucs}$ from the others at a given time $t$, Eq. \eqref{lyapunov_mod3} can be written by
\begin{equation}\label{lyapunov_mod4}
    \dfrac{d\mathcal{L}}{dt} = -\tau_{W}^{-1}\sum_{w_i\in D_{ucs}}\left(\dfrac{\partial C}{\partial w_i}\right)^2 -\sum_{w_i\notin D_{ucs}}\left[\tau_{W}^{-1}\left(\dfrac{\partial C}{\partial w_i}+\lambda_i\dfrac{\partial Y_i}{\partial w_i}\right)^2 - \tau_{\lambda}^{-1}Y_i^2\right].
\end{equation}
The latter term on the right-hand side of Eq. \eqref{lyapunov_mod4} indicates that the weights outside the window $D_{ucs}$ are being quantized at the cost of increase of loss. However, as indicated by the former term, the weights in the window $D_{ucs}$ are being optimized only to decrease the loss function with partially quantized weights. Compare this gradual quantization with abrupt quantization without the gradually vanishing unconstrained-weight window, where all weights are subject to simultaneous (abrupt) quantization. The gradual quantization allows the weights in the window to further reduce the loss function regarding the weights that have already been quantized or are being quantized, and thus the eventual cost of quantization is likely smaller than the simultaneous quantization case.

\section{Experiments}
To evaluate the performance of our algorithm, we trained three models (AlexNet, and ResNet-18 and 50) on the ImageNet dataset \cite{ILSVRC15} with four different weight constraints (binary, ternary, and one-bit, and two-bit shift weight constraints). ImageNet consists of approximately 1.2 million training images and 50 thousands validation images. All training images were pre-processed such that they were randomly cropped and resized to $224\times224$ with mean subtraction and variance division. Additionally, random horizontal flipping and color jittering were applied. For validation, the images were resized to $256\times256$ and their centers in $224\times224$ were cropped. We evaluated the top-1 and top-5 classification accuracies on the validation set.

We considered binary, ternary, one-bit shift and two-bits shift weight constraints to validate the CBP algorithm as a general weight-quantization framework. For all cases, we introduced layer-wise scaling factors $a$ such that $a^{(l)}$ (for the $l$th layer) is given by $a^{(l)} = \Vert\bm{W}^{(l)}\Vert_1/n^{(l)}$, where $\bm{W}^{(l)}$ and $n^{(l)}$ denote the weight matrix of the $l$th layer and the number of elements of $\bm{W}^{(l)}$, respectively. As for \cite{Rastegari2016XNOR} and \cite{Li2016Ternary}, the weight matrices of the first and last layers were not quantized. The quantized weights employed for each constraint case is elaborated as follows.

\textbf{Binary-weight constraint:} A set of quantized weights $\bm{Q}$ is $\left\{-a,a\right\}$.

\textbf{The other weight constraints:} A set of quantized weights $\bm{Q}$ is $\left\{0, \pm 2^{-d}a\right\}_{d=0}^{D}$, where $D=0$, 1 and 2 for the ternary, one-bit shift, and two-bit shift weight constraints. Each ternary weight needs 2-bit memory while each of one-bit and two-bit shift weight needs 3-bit memory. 

We adopted the STE~\cite{bengio2013estimating} to train the models such that the forward pass is based on quantized weights $w_q$, 
\begin{equation}
    w_q = q_1+\sum_{i=1}^{n_q-1}\left(q_{i+1}-q_{i}\right)\left(sign\left(w-m_i\right)+1\right)/2,\nonumber
\end{equation}
whereas the backward pass uses the real-valued weights $w$ that are subject to quantization, $\partial\mathcal{L}/\partial w = \partial\mathcal{L}/\partial w_q$.

For all cases, the DNN was pre-trained using conventional backpropagation with full-precision weights and activations, which was followed by post-learning using CBP. We used the stochastic gradient descent with momentum to minimize the Lagrangian function $\mathcal{L}$ with respect to $\bm{W}$ and Adam~\cite{Kingma2014Adam} to maximize $\mathcal{L}$ with respect to $\bm{\lambda}$. The initial multiplier-learning rate $\eta_{\lambda}$ and $p_{max}$ were set to $10^{-4}$ and 20, respectively. The weight-learning rate $\eta_{W}$ decreased to $10^{-1}$ times the initial rate when $g$ reached 20 for all cases except GoogLeNet with the binary-weight constraint (the weight-learning rate decayed when $g=200$). The hyperparameters used are shown in Appendix~\ref{app:hyperparameters}, which were found using manual searches.

By asymptotically minimizing the Lagrangian function $\mathcal{L}$, the constraint function $\bm{cs}\left(\bm{W}\right)$ approaches $\bm{0}$. The degree of constraint-failure per weight was evaluated based on the constraint-failure score ($CFS$), which is defined as
\begin{equation}\label{cfs}
CFS=\dfrac{1}{n_w}\sum_{i=1}^{n_w}Y_i\left(w_i;\bm{Q},\bm{M}\right),
\end{equation}
where $n_w$ denotes the total number of weights. The CBP algorithm was implemented in Python on a workstation (CPU: Intel Xeon Silver 4110 2.10GHz, GPU: Titan RTX). 

It should be noted that we used CBP as a post-training method, so that the random seed effect is involved only when organizing the mini-batches. The accuracy deviation is consequently marginal. 

\begin{table}[hbt!]
  \caption{Top-1/Top-5 accuracy of AlexNet, ResNet-18, ResNet-50, and GoogLeNet on ImageNet}
  \label{comparison}
  \centering
  \resizebox{\textwidth}{!}{
  \begin{tabular}{cccccc}
    \toprule
    Algorithm & Binary & Ternary & One-bit shift & Two-bit shift & Full-precision \\
    \midrule
    \multicolumn{6}{c}{AlexNet}\\
    \midrule
    BWN~\cite{Rastegari2016XNOR} & 56.8\%/79.4\% & - & - & - &\multirow{5}{*}{60.0\%/82.4\%} \\
    ADMM~\cite{Leng2018Extremely} & 57.0\%/79.7\% & 58.2\%/80.6\% & 59.2\%/81.8\% & 60.0\%/82.2\%&\\
    LQ-Nets~\cite{zhang2018lq} & - & 60.5\%/82.7\% & - & - &\\
    TTQ~\cite{zhu2016trained} & - &57.5\%/79.7\% & - & - &\\
    \textbf{CBP} & \bm{58.0\%/80.6\%} & \bm{58.8\%/81.2\%} & \bm{60.8\%/82.6\%} & \bm{60.9\%/82.8\%} &\\
    \midrule
    \multicolumn{6}{c}{ResNet-18}\\
    \midrule
    BWN~\cite{Rastegari2016XNOR} & 60.8\%/83.0\% & - & - & - &\multirow{8}{*}{69.6\%/89.2\%} \\
    TWN~\cite{Li2016Ternary} & - & 61.8\%/84.2\% & - & -&\\
    INQ~\cite{zhou2017incremental} & - & 66.0\%/87.1\% & - & 68.1\%/88.4\%&\\
    ADMM~\cite{Leng2018Extremely} & 64.8\%/86.2\% & 67.0\%/87.5\% & 67.5\%/87.9\% & 68.1\%/88.3\%&\\
    QN~\cite{Yang_2019_CVPR} & 66.5\%/87.3\% & 69.1\%/88.9\% & 69.9\%/89.3\% & 70.4\%/89.6\% &\\
    IR-Nets~\cite{qin2020forward} & 66.5\%/86.8\% & - & - & - &\\
    LQ-Nets~\cite{zhang2018lq} & - & 68.0\%/88.0\% & - & 69.3\%/88.3\%&\\
    TTQ~\cite{zhu2016trained} & - &66.6\%/87.2\% & - & - &\\
    DSQ~\cite{Gong_2019_ICCV} & 63.71\%/- & - & - & -&\\
    LS~\cite{Pouransari_2020_CVPR_Workshops} & 66.1\%/86.5 & - & - & -&\\
    \textbf{CBP} & \bm{66.6\%/87.1\%} & \bm{69.1\%/89.0\%} & \bm{69.6\%/89.3\%} & \bm{69.6\%/89.3\%} &\\
    \midrule
    \multicolumn{6}{c}{ResNet-50}\\
    \midrule
    BWN~\cite{Rastegari2016XNOR} & 63.9\%/85.1\% & - & - & - &\multirow{4}{*}{76.0\%/93.0\%} \\
    TWN~\cite{Li2016Ternary} & - & 65.6\%/86.5\% & - & -&\\
    ADMM~\cite{Leng2018Extremely} & 68.7\%/88.6\% & 72.5\%/90.7\% & 73.9\%/91.5\% & 74.0\%/91.6\%&\\
    QN~\cite{Yang_2019_CVPR} & 72.8\%/91.3\% & 75.2\%/92.6\% & 75.5\%/92.8\% & 76.2\%/93.2\% &\\
    \textbf{CBP} & \bm{74.4\%/92.1\%} & \bm{75.1\%/92.5\%} & \bm{76.0\%/92.9\%} & \bm{76.0\%/92.9\%} &\\
    \midrule
    \multicolumn{6}{c}{GoogLeNet}\\
    \midrule
    BWN~\cite{Rastegari2016XNOR} & 59.0\%/82.4\% & - & - & - &\multirow{4}{*}{71.0\%/90.8\%} \\
    TWN~\cite{Li2016Ternary} & - & 61.2\%/86.5\% & - & -&\\
    ADMM~\cite{Leng2018Extremely} & 60.3\%/83.2\% & 63.1\%/85.4\% & 65.9\%/87.3\% & 66.3\%/87.5\%&\\
    \textbf{CBP} & \bm{64.0\%/86.0\%} & \bm{66.0\%/87.3\%} & \bm{69.8\%/89.7\%} & \bm{70.5\%/90.1\%} &\\
    \bottomrule
  \end{tabular}
  }
 \end{table}

\subsection{AlexNet}
AlexNet is a simple convolutional networks which consists of five convolutional layers and three fully-connected layers \cite{Krizhevsky2012ImageNet}. We used AlexNet with batch normalization~\cite{ioffe2015batch} as in~\cite{Rastegari2016XNOR, Li2016Ternary, Leng2018Extremely}. The initial weight-learning rate $\eta_{W}$ was set to $10^{-3}$ for the binary- and ternary-weight constraints and $10^{-4}$ for the other constraints. The batch size was set to 256. We used a weight decay rate (L2-regularization) of $5\times10^{-4}$.
The CBP algorithm exhibited state-of-the-art results as listed in Table.~\ref{comparison}. The detailed behaviors of networks with binary- and ternary-weight constraints are addressed in Appendix \ref{app:data}. The behaviors highlight asymptotic increases in the top-1 and top-5 recognition accuracy with asymptotic decrease in $CFS$. Consequently, the weight distribution bifurcates asymptotically, fulfilling the constraints imposed on the weights.


\subsection{ResNet-18 and ResNet-50}
We also evaluated our algorithm on ResNet-18 and ResNet-50~\cite{He2016Deep} which were pre-trained using conventional backpropagation. For ResNet-18, the initial weight-learning rate $\eta_{W}$ was set to $10^{-3}$ for all constraint cases. The batch size was 256. For ReNet-50, the initial weight-learning rate $\eta_{W}$ was set to $10^{-3}$ for binary- and ternary-weight constraints and $10^{-4}$ for the other cases. The batch size was set to 128. The weight decay rate (L2-regularization) was set to $10^{-4}$ for both ResNet-18 and ResNet-50. The results are summarized in Table~\ref{comparison}, highlighting state-of-the-art performance compared with previous results. Notably, CBP with the one- and two-bit shift weight constraints almost reaches the performance of the full-precision networks. Particularly, for ResNet-50, CBP with the binary-weight constraint significantly outperforms other methods. The detailed behaviors of weight quantizations for ResNet-18 and ResNet-50 are addressed in Appendix~\ref{app:data}.

\subsection{GoogLeNet}
GoogLeNet consists of 22 layers organized with the inception modules~\cite{Szegedy_2015_CVPR}. We evaluated our algorithm on GoogLeNet which was pre-trained using conventional backpropagation. The weight-learning rate $\eta_{W}$ was initially set to $10^{-3}$ for all constraint cases. The batch size, $p_{max}$, and weight decay rate were set to 256, 10 and $10^{-4}$, respectively. 
For the binary-weight constraint case, the weight-learning rate $\eta_{W}$ decreased to $10^{-1}$ times the initial rate when $g$ reached 200. The results are summarized in Table~\ref{comparison}. Notably, CBP significantly outperforms the previous results for all constraint cases. The detailed behaviors of weight quantizations for GoogLeNet are addressed in Appendix~\ref{app:data}.

\section{Discussion}\label{discussion}
To evaluate the effect of the constraint function on training performance, we considered three different cases of post-training a DNN using CBP (i) with and (ii) without the unconstrained-weight window, and (iii) without the constraint function at all, i.e., conventional backpropagation with STE only. Because all DNNs in this work include STE, the comparison between these three cases highlights the effect of the constraint function in addition to STE. For all cases, pre-training using conventional backpropagation preceded the three different post-training schemes. Table~\ref{ablation} addresses the comparison, highlighting accuracy and $CFS$ improvement in Case (i) over Case (iii). This indicates that CBP allows the DNN to learn features while the weights are being quantized by the constraint function with the gradually vanishing unconstrained-weight window. On the contrary, CBP without the unconstrained-weight window (Case (ii)) rather degraded the accuracy compared with Case (iii), whereas the improvement on $CFS$ was significant. This may be because the constraint function without unconstrained-weight window strongly forced the weights to be quantized without learning the features.

\begin{table}[hbt!]
  \caption{Top-1 accuracy of ResNet-18 trained in various conditions}
  \label{ablation}
  \centering
  \begin{tabular}{ccc}
    \toprule
    Post-training algorithm & Accuracy & $CFS$ \\
    \midrule
    CBP with update of $g$ & 66.6\%/87.1\% & 1.19 $\times 10^{-3}$\\
    CBP without unconstrained-weight window & 60.2\%/82.7\% & 1.05$\times 10^{-5}$\\
    Backpropagation+STE & 64.6\%/85.9\% & 3.58$\times 10^{-2}$\\
    \bottomrule
  \end{tabular}
\end{table}

We used manual searches for the hyperparameters, weight-learning rate $\eta_{W}$, multiplier-learning rate $\eta_{\lambda}$, multiplier update scheduling variable $p_{max}$, and unconstrained-weight window variable $\Delta g$. We used identical parameters $\eta_{\lambda}$, $p_{max}$, and $\Delta g$ for all four models, each with the four distinct constraints, i.e., 12 cases in total. CBP is unlikely susceptible to the hyperparameters for different models. Therefore, the hyperparameters used in this work may serve as the decent initial values for other models.

CBP needs floating-point operations (FLOPs) for the Lagrange multiplier update in addition to FLOPs for the weight update, which causes additional computational complexity. Given that a Lagrange multiplier is assigned to each weight, the additional complexity scales with the number of weights. The total computational complexity of CBP exceeds the conventional backpropagation by approximately 2\% for ResNet-18 and ResNet-50 whereas by approximately 25\% for AlexNet. The complexity estimation is elaborated in Appendix~\ref{app:complexity}.

We used CBP as a post-training method. That is, the networks considered were pre-trained using conventional backpropagation. Applying CBP to untrained networks hardly reached the accuracies of classification listed in Table~\ref{comparison}. When efficiency in training is of the most important concern, CBP may not be the best choice. However, when efficiency in memory usage is of the most important concern, CBP may be the optimal choice with regard to its excellent learning capability with maximum 3-bit weight precision, which almost reaches the classification accuracy of the full-precision networks. The use of one-bit or two-bit shift weights can avoid multiplication operations that consume a considerable amount of power, so that it can significantly improve computational efficiency. Additionally, CBP is not a method tailored to particular models. Therefore, our work may have a broader impact on various application domains where memory capacity is limited and/or computational efficiency is of significant concern.

\section{Conclusion}
In this study, we proposed the CBP algorithm that trains DNNs by simultaneously considering both loss and constraint functions. It enables the implementation of any well-defined set of constraints on weights in a common training framework, unlike previous algorithms for weight quantization, which were tailored to particular constraints. The evaluation of CBP on ImageNet with with different constraint functions (binary, ternary, one-bit shift and two-bit shift weight constraints) demonstrated its high capability, highlighting its state-of-the-art accuracy of classification.

\begin{ack}
This work was supported by the Ministry of Trade, Industry \& Energy (grant no. 20012002) and Korea Semiconductor Research Consortium program for the development of future semiconductor devices and by National R\&D Program through the National Research Foundation of Korea (NRF) funded by Ministry of Science and ICT (2021M3F3A2A01037632).
\end{ack}

\bibliography{ref.bib}

\begin{thebibliography}{10}
\providecommand{\url}[1]{#1}
\csname url@samestyle\endcsname
\providecommand{\newblock}{\relax}
\providecommand{\bibinfo}[2]{#2}
\providecommand{\BIBentrySTDinterwordspacing}{\spaceskip=0pt\relax}
\providecommand{\BIBentryALTinterwordstretchfactor}{4}
\providecommand{\BIBentryALTinterwordspacing}{\spaceskip=\fontdimen2\font plus
\BIBentryALTinterwordstretchfactor\fontdimen3\font minus
  \fontdimen4\font\relax}
\providecommand{\BIBforeignlanguage}[2]{{%
\expandafter\ifx\csname l@#1\endcsname\relax
\typeout{** WARNING: IEEEtran.bst: No hyphenation pattern has been}%
\typeout{** loaded for the language `#1'. Using the pattern for}%
\typeout{** the default language instead.}%
\else
\language=\csname l@#1\endcsname
\fi
#2}}
\providecommand{\BIBdecl}{\relax}
\BIBdecl

\bibitem{Taigman2014DeepFace}
Y.~{Taigman}, M.~{Yang}, M.~{Ranzato}, and L.~{Wolf}, ``Deepface: Closing the
  gap to human-level performance in face verification,'' in \emph{2014 IEEE
  Conference on Computer Vision and Pattern Recognition}, 2014, pp. 1701--1708.

\bibitem{Krizhevsky2012ImageNet}
A.~Krizhevsky, I.~Sutskever, and G.~E. Hinton, ``{ImageNet} classification with
  deep convolutional neural networks,'' in \emph{Advances in Neural Information
  Processing Systems 25}, 2012, pp. 1097--1105.

\bibitem{Hinton2012Deep}
G.~Hinton, L.~Deng, D.~Yu, G.~E. Dahl, A.-r. Mohamed, N.~Jaitly, A.~Senior,
  V.~Vanhoucke, P.~Nguyen, T.~N. Sainath \emph{et~al.}, ``Deep neural networks
  for acoustic modeling in speech recognition: The shared views of four
  research groups,'' \emph{IEEE Signal Processing Magazine}, vol.~29, no.~6,
  pp. 82--97, 2012.

\bibitem{Sainath2013Deep}
T.~N. {Sainath}, A.~{Mohamed}, B.~{Kingsbury}, and B.~{Ramabhadran}, ``Deep
  convolutional neural networks for {LVCSR},'' in \emph{2013 IEEE International
  Conference on Acoustics, Speech and Signal Processing}, 2013, pp. 8614--8618.

\bibitem{Dahl2012Context}
G.~E. {Dahl}, D.~{Yu}, L.~{Deng}, and A.~{Acero}, ``Context-dependent
  pre-trained deep neural networks for large-vocabulary speech recognition,''
  \emph{IEEE Transactions on Audio, Speech, and Language Processing}, vol.~20,
  no.~1, pp. 30--42, 2012.

\bibitem{Hochreiter1997Long}
S.~Hochreiter and J.~Schmidhuber, ``Long short-term memory,'' \emph{Neural
  Computation}, vol.~9, no.~8, pp. 1735--1780, 1997.

\bibitem{Lee2019SIMPLE}
K.~Lee, D.~Yoo, W.~Jeong, and S.~Han, ``{SIMPLE-NN:} an efficient package for
  training and executing neural-network interatomic potentials,''
  \emph{Computer Physics Communications}, vol. 242, pp. 95--103, 2019.

\bibitem{Kim2019Artificial}
G.~Kim, V.~Kornijcuk, D.~Kim, I.~Kim, C.~S. Hwang, and D.~S. Jeong,
  ``Artificial neural network for response inference of a nonvolatile
  resistance-switch array,'' \emph{Micromachines}, vol.~10, no.~4, p. 219,
  2019.

\bibitem{Goodfellow2014Generative}
I.~Goodfellow, J.~Pouget-Abadie, M.~Mirza, B.~Xu, D.~Warde-Farley, S.~Ozair,
  A.~Courville, and Y.~Bengio, ``Generative adversarial nets,'' in
  \emph{Advances in Neural Information Processing Systems 27}, 2014, pp.
  2672--2680.

\bibitem{Radford2015Unsupervised}
A.~Radford, L.~Metz, and S.~Chintala, ``Unsupervised representation learning
  with deep convolutional generative adversarial networks,'' in \emph{4th
  International Conference on Learning Representations, {ICLR}, Conference
  Track Proceedings}, 2016.

\bibitem{Metz2016Unrolled}
L.~Metz, B.~Poole, D.~Pfau, and J.~Sohl-Dickstein, ``Unrolled generative
  adversarial networks,'' in \emph{5th International Conference on Learning
  Representations, {ICLR}, Conference Track Proceedings}, 2017.

\bibitem{Chen2016InfoGAN}
X.~Chen, Y.~Duan, R.~Houthooft, J.~Schulman, I.~Sutskever, and P.~Abbeel,
  ``{InfoGAN:} interpretable representation learning by information maximizing
  generative adversarial nets,'' in \emph{Advances in Neural Information
  Processing Systems 29}, 2016, pp. 2172--2180.

\bibitem{Arjovsky2017Wasserstein}
M.~Arjovsky, S.~Chintala, and L.~Bottou, ``{W}asserstein generative adversarial
  networks,'' in \emph{Proceedings of the 34th International Conference on
  Machine Learning}, ser. Proceedings of Machine Learning Research,
  vol.~70.\hskip 1em plus 0.5em minus 0.4em\relax PMLR, 2017, pp. 214--223.

\bibitem{Simonyan2014Very}
K.~Simonyan and A.~Zisserman, ``Very deep convolutional networks for
  large-scale image recognition,'' in \emph{3rd International Conference on
  Learning Representations, {ICLR}, Conference Track Proceedings}, 2015.

\bibitem{He2016Deep}
K.~He, X.~Zhang, S.~Ren, and J.~Sun, ``Deep residual learning for image
  recognition,'' in \emph{The IEEE Conference on Computer Vision and Pattern
  Recognition}, 2016.

\bibitem{Courbariaux2015BinaryConnect}
M.~Courbariaux, Y.~Bengio, and J.-P. David, ``{BinaryConnect:} training deep
  neural networks with binary weights during propagations,'' in \emph{Advances
  in Neural Information Processing Systems 28}, 2015, pp. 3123--3131.

\bibitem{Rastegari2016XNOR}
M.~Rastegari, V.~Ordonez, J.~Redmon, and A.~Farhadi, ``{XNOR-Net:} imagenet
  classification using binary convolutional neural networks,'' in
  \emph{European Conference on Computer Vision}, 2016, pp. 525--542.

\bibitem{Lin2015Neural}
Z.~Lin, M.~Courbariaux, R.~Memisevic, and Y.~Bengio, ``Neural networks with few
  multiplications,'' in \emph{4th International Conference on Learning
  Representations, {ICLR}, Conference Track Proceedings}, 2016.

\bibitem{Li2016Ternary}
F.~Li, B.~Zhang, and B.~Liu, ``Ternary weight networks,'' 2016,
  \emph{arXiv}:1605.04711.

\bibitem{Gong2014Compressing}
Y.~Gong, L.~Liu, M.~Yang, and L.~Bourdev, ``Compressing deep convolutional
  networks using vector quantization,'' 2014, \emph{arXiv}:1412.6115.

\bibitem{Mellempudi2017Ternary}
N.~Mellempudi, A.~Kundu, D.~Mudigere, D.~Das, B.~Kaul, and P.~Dubey, ``Ternary
  neural networks with fine-grained quantization,'' 2017,
  \emph{arXiv}:1705.01462.

\bibitem{Soudry2014Expectation}
D.~Soudry, I.~Hubara, and R.~Meir, ``Expectation backpropagation:
  Parameter-free training of multilayer neural networks with continuous or
  discrete weights,'' in \emph{Advances in Neural Information Processing
  Systems 27}, 2014, pp. 963--971.

\bibitem{zhou2017incremental}
A.~Zhou, A.~Yao, Y.~Guo, L.~Xu, and Y.~Chen, ``Incremental network
  quantization: Towards lossless cnns with low-precision weights,'' in
  \emph{5th International Conference on Learning Representations, {ICLR},
  Conference Track Proceedings}, 2017.

\bibitem{Courbariaux2016Binarized}
M.~Courbariaux, I.~Hubara, D.~Soudry, R.~El-Yaniv, and Y.~Bengio, ``Binarized
  neural networks: Training deep neural networks with weights and activations
  constrained to +1 or -1,'' 2016, \emph{arXiv}:1602.02830.

\bibitem{zhu2016trained}
C.~Zhu, S.~Han, H.~Mao, and W.~J. Dally, ``Trained ternary quantization,''
  \emph{arXiv preprint arXiv:1612.01064}, 2016.

\bibitem{zhang2018lq}
D.~Zhang, J.~Yang, D.~Ye, and G.~Hua, ``Lq-nets: Learned quantization for
  highly accurate and compact deep neural networks,'' in \emph{Proceedings of
  the European conference on computer vision (ECCV)}, 2018, pp. 365--382.

\bibitem{Elhoushi_2021_CVPR}
M.~Elhoushi, Z.~Chen, F.~Shafiq, Y.~H. Tian, and J.~Y. Li, ``Deepshift: Towards
  multiplication-less neural networks,'' in \emph{Proceedings of the IEEE/CVF
  Conference on Computer Vision and Pattern Recognition (CVPR) Workshops}, June
  2021, pp. 2359--2368.

\bibitem{qin2020forward}
H.~Qin, R.~Gong, X.~Liu, M.~Shen, Z.~Wei, F.~Yu, and J.~Song, ``Forward and
  backward information retention for accurate binary neural networks,'' in
  \emph{Proceedings of the IEEE/CVF Conference on Computer Vision and Pattern
  Recognition}, 2020, pp. 2250--2259.

\bibitem{Gong_2019_ICCV}
R.~Gong, X.~Liu, S.~Jiang, T.~Li, P.~Hu, J.~Lin, F.~Yu, and J.~Yan,
  ``Differentiable soft quantization: Bridging full-precision and low-bit
  neural networks,'' in \emph{Proceedings of the IEEE/CVF International
  Conference on Computer Vision (ICCV)}, October 2019.

\bibitem{Yang_2019_CVPR}
J.~Yang, X.~Shen, J.~Xing, X.~Tian, H.~Li, B.~Deng, J.~Huang, and X.-s. Hua,
  ``Quantization networks,'' in \emph{Proceedings of the IEEE/CVF Conference on
  Computer Vision and Pattern Recognition (CVPR)}, June 2019.

\bibitem{Pouransari_2020_CVPR_Workshops}
H.~Pouransari, Z.~Tu, and O.~Tuzel, ``Least squares binary quantization of
  neural networks,'' in \emph{Proceedings of the IEEE/CVF Conference on
  Computer Vision and Pattern Recognition (CVPR) Workshops}, June 2020.

\bibitem{elthakeb2020waveq}
A.~T. Elthakeb, P.~Pilligundla, F.~Mireshghallah, T.~Elgindi, C.-A. Deledalle,
  and H.~Esmaeilzadeh, ``Waveq: Gradient-based deep quantization of neural
  networks through sinusoidal adaptive regularization,'' \emph{arXiv preprint
  arXiv:2003.00146}, 2020.

\bibitem{Leng2018Extremely}
C.~Leng, Z.~Dou, H.~Li, S.~Zhu, and R.~Jin, ``Extremely low bit neural network:
  Squeeze the last bit out with admm,'' in \emph{Proceedings of the AAAI
  Conference on Artificial Intelligence}, vol.~32, no.~1, 2018.

\bibitem{Bertsekas2014Constrained}
D.~Bertsekas and W.~Rheinboldt, \emph{Constrained Optimization and Lagrange
  Multiplier Methods}, ser. Computer science and applied mathematics.\hskip 1em
  plus 0.5em minus 0.4em\relax Elsevier Science, 2014.

\bibitem{Luenberger2015Linear}
D.~Luenberger and Y.~Ye, \emph{Linear and Nonlinear Programming}.\hskip 1em
  plus 0.5em minus 0.4em\relax Springer, 2015.

\bibitem{Platt1987Constrained}
J.~C. Platt and A.~H. Barr, ``Constrained differential optimization,'' in
  \emph{Proceedings of the 1987 International Conference on Neural Information
  Processing Systems}, 1987, p. 612–621.

\bibitem{ILSVRC15}
O.~Russakovsky, J.~Deng, H.~Su, J.~Krause, S.~Satheesh, S.~Ma, Z.~Huang,
  A.~Karpathy, A.~Khosla, M.~Bernstein, A.~C. Berg, and L.~Fei-Fei, ``{ImageNet
  Large Scale Visual Recognition Challenge},'' \emph{International Journal of
  Computer Vision (IJCV)}, vol. 115, no.~3, pp. 211--252, 2015.

\bibitem{bengio2013estimating}
Y.~Bengio, N.~L{\'e}onard, and A.~Courville, ``Estimating or propagating
  gradients through stochastic neurons for conditional computation,''
  \emph{arXiv preprint arXiv:1308.3432}, 2013.

\bibitem{Kingma2014Adam}
D.~P. Kingma and J.~Ba, ``Adam: A method for stochastic optimization,'' in
  \emph{3rd International Conference on Learning Representations, {ICLR},
  Conference Track Proceedings}, 2015.

\bibitem{ioffe2015batch}
S.~Ioffe and C.~Szegedy, ``Batch normalization: Accelerating deep network
  training by reducing internal covariate shift,'' in \emph{International
  conference on machine learning}.\hskip 1em plus 0.5em minus 0.4em\relax PMLR,
  2015, pp. 448--456.

\bibitem{Szegedy_2015_CVPR}
C.~Szegedy, W.~Liu, Y.~Jia, P.~Sermanet, S.~Reed, D.~Anguelov, D.~Erhan,
  V.~Vanhoucke, and A.~Rabinovich, ``Going deeper with convolutions,'' in
  \emph{Proceedings of the IEEE Conference on Computer Vision and Pattern
  Recognition (CVPR)}, June 2015.

\end{thebibliography}
\newpage
\setcounter{equation}{0}
\setcounter{figure}{0}
\setcounter{table}{0}
\setcounter{page}{1}
\appendix
\section{Appendix}
\subsection{Quantization kinetics in the continuous time domain}\label{kinetics_general}

The asymptotic quantization of weights $\bm{W}$ using BDMM with a Lagrangian function $\mathcal{L}$ follows the discrete updates,
\begin{eqnarray}\nonumber
    \bm{W} &\leftarrow& \bm{W} - \eta_W\nabla_{\bm{W}}\mathcal{L}\left(\bm{W},\bm{\lambda}\right)\\\nonumber
    \bm{\lambda} &\leftarrow& \bm{\lambda} + \eta_{\lambda}\nabla_{\bm{\lambda}}\mathcal{L}\left(\bm{x},\bm{\lambda}\right),\nonumber
\end{eqnarray}
which can be expressed in the continuous time domain as follows.
\begin{equation}\label{bdmm1cont}
    \dfrac{d\bm{W}}{dt}=-\tau_{W}^{-1}\nabla_{\bm{W}}\mathcal{L},
\end{equation}
and
\begin{equation}\label{bdmm2cont}
    \dfrac{d\bm{\lambda}}{dt}=\tau_{\lambda}^{-1}\nabla_{\bm{\lambda}}\mathcal{L},
\end{equation}
where the reciprocal time constants $\tau_{W}^{-1}$ and $\tau_{\lambda}^{-1}$ are proportional to learning rates $\eta_W$ and $\eta_{\lambda}$, respectively. The Lagrangian function $\mathcal{L}$ is a Lyapunov function of $\bm{W}$ and $\bm{\lambda}$.
\begin{equation}\label{lyapunov1}
    \dfrac{d\mathcal{L}}{dt} = \nabla_{\bm{W}}\mathcal{L}\cdot \frac{d\bm{W}}{dt} + \nabla_{\bm{\lambda}}\mathcal{L}\cdot \frac{d\bm{\lambda}}{dt}.
\end{equation}
Plugging Eqs. \eqref{bdmm1cont} and \eqref{bdmm2cont} into Eq. \eqref{lyapunov1} yields
\begin{equation}\label{lyapunov2}
    \dfrac{d\mathcal{L}}{dt} = -\tau_{W}^{-1}\left|\nabla_{\bm{W}}\mathcal{L}\right|^2 + \tau_{\lambda}^{-1}\left|\nabla_{\bm{\lambda}}\mathcal{L}\right|^2.
\end{equation}
The gradients in Eq. \eqref{lyapunov2} can be calculated from the Lagrangian function $\mathcal{L}$, given by
\begin{equation}\nonumber
    \mathcal{L} = C\left(\bm{y}^{(i)}, \hat{\bm{y}}^{(i)};\bm{W}\right)+\bm{\lambda}^{\rm T}\bm{cs}\left(\bm{W}\right),
\end{equation}
as follows.
\begin{eqnarray}\label{gradients}
    \left|\nabla_{\bm{W}}\mathcal{L}\right|^2 &=& \sum_{i=0}^{n_w}\left(\dfrac{\partial C}{\partial w_i}+\lambda_i\dfrac{\partial cs_i}{\partial w_i}\right)^2,\nonumber\\ 
    \left|\nabla_{\bm{\lambda}}\mathcal{L}\right|^2 &=& \sum_{i=0}^{n_w}cs_i^2. 
\end{eqnarray}
Therefore, the following equation holds.
\begin{equation}\label{app:lyapunov3}
    \dfrac{d\mathcal{L}}{dt} = -\tau_{W}^{-1}\sum_{i=0}^{n_w}\left(\dfrac{\partial C}{\partial w_i}+\lambda_i\dfrac{\partial cs_i}{\partial w_i}\right)^2 + \tau_{\lambda}^{-1}\sum_{i=0}^{n_w}cs_i^2.
\end{equation}
The Lagrange multiplier $\lambda_i$ at time $t$ is evaluated using Eq. \eqref{bdmm2cont}.
\begin{equation}\label{app:multiplier}
    \lambda_i\left(t\right)=\lambda_i\left(0\right)+\tau_{\lambda}^{-1}\int_0^t cs_i dt.
\end{equation}

\newpage
\subsection{Pseudocode}\label{app:pseudocode}
\begin{algorithm}
\SetAlgoLined
\KwResult{Updated weight matrix $\bm{W}$}
Pre-training using conventional backprop\;
Initialization such that $\bm{\lambda}\gets\bm{0}, p\gets 0, g\gets1$\;
Initial update of $\bm{\lambda}$\;
\For{$epoch = 1$ \KwTo $N$}{
    $\mathcal{L}_{sum}\gets$ $0$\;
    \tcc{Update of weight $\bm{W}$}
    \For{$i = 1$ \KwTo $M$}{
        $\bm{x}^{(i)}, \bm{\hat{y}}^{(i)}\gets$ minibatch$(\bm{Tr})$\;
        $\bm{y}^{(i)}\gets$ model$\left(\bm{x}^{(i)};\bm{W}\right)$\;
        $\mathcal{L}\gets$ \textit{C}$\left(\bm{\hat{y}}^{(i)}, \bm{y}^{(i)};\bm{W}\right)$ + $\bm{\lambda}^{\rm T}\bm{cs}\left(\bm{W};\bm{Q},\bm{M}, g\right)$\;
        $\mathcal{L}_{sum}\gets$ $\mathcal{L}_{sum}+\mathcal{L}$\;
        $\bm{W}\gets$ clip$\left(\bm{W}-\eta_W\nabla_{\bm{W}}\mathcal{L}\right)$\;}
    \tcc{Update of window variable $g$ and Lagrange multiplier $\bm{\lambda}$}
    $p\gets$ $p+1$\;
    \uIf{$\mathcal{L}_{sum}\geq$ $\mathcal{L}_{sum}^{pre}$ or $p=p_{max}$}{
        $g \gets$ $g + \Delta g$\;
        $\bm{\lambda}\gets \bm{\lambda}$ + $\eta_{\lambda}\bm{cs}\left(\bm{W}, g\right)$\;
        $p\gets$ $0$\;
        $\mathcal{L}_{sum}^{pre}\gets$ $\mathcal{L}_{sum}^{max}$\;}
    \Else{
        $\mathcal{L}_{sum}^{pre}\gets$ $\mathcal{L}_{sum}$\;}
    }
    \caption{CBP algorithm. $N$ denotes the number of training epochs in aggregate. $M$ denotes the number of mini-batches of the training set $\bm{Tr}$. The function $minibatch\left(\bm{Tr}\right)$ samples a mini-batch of training data and their targets from $\bm{Tr}$. The function $model\left(x, \bm{W}\right)$ returns the output from the network for a given mini-batch $\bm{x}$. The function clip($\bm{W}$) denotes the clipping weight, and $\eta_W$ and $\eta_{\lambda}$ denote the weight- and multiplier-learning rates, respectively.} 
    \label{algo}
\end{algorithm}

\subsection{Quantization kinetics with gradually vanishing unconstrained-weight window}\label{kinetics}
We consider the gradually vanishing unconstrained-weight window in addition to the kinetics of update of weights and lagrange multipliers in Eqs. \eqref{bdmm1cont} and \eqref{bdmm2cont}. Given that the update frequency of the unconstrained-weight window variable $g$ is equal to that of the Lagrange multipliers, its time constant equals $\tau_{\lambda}$.
\begin{equation}\label{bdmm3cont}
    \dfrac{dg}{dt}=\tau_{\lambda}^{-1}g_0,
\end{equation}
where $g_0 = 1$ when $g<10$, and $g_0=10$ otherwise. Regarding the Lagrangian function $\mathcal{L}$ as a Lyapunov function of $\bm{W}$, $\bm{\lambda}$, and $g$, Eq. \eqref{lyapunov1} should be modified as follow.
\begin{equation}\label{lyapunov_mod}
    \dfrac{d\mathcal{L}}{dt} = \nabla_{\bm{W}}\mathcal{L}\cdot \frac{d\bm{W}}{dt} + \nabla_{\bm{\lambda}}\mathcal{L}\cdot \frac{d\bm{\lambda}}{dt} + \dfrac{\partial\mathcal{L}}{\partial g}\dfrac{dg}{dt}.
\end{equation}
Plugging Eqs. \eqref{bdmm1cont}, \eqref{bdmm2cont}, and \eqref{bdmm3cont} into Eq. \eqref{lyapunov_mod} yields
\begin{equation}\label{lyapunov_mod2}
    \dfrac{d\mathcal{L}}{dt} = -\tau_{W}^{-1}\left|\nabla_{\bm{W}}\mathcal{L}\right|^2 + \tau_{\lambda}^{-1}\left|\nabla_{\bm{\lambda}}\mathcal{L}\right|^2 + \tau_{\lambda}^{-1} g_0 \dfrac{\partial\mathcal{L}}{\partial g}.
\end{equation}
The gradients in Eq. \eqref{lyapunov_mod2} can be calculated using Eqs. \eqref{lf}, \eqref{csf}, and \eqref{ucs1} as follows.
\begin{eqnarray}
    \left|\nabla_{\bm{W}}\mathcal{L}\right|^2 &=& \sum_{i=0}^{n_w}\left[\dfrac{\partial C}{\partial w_i}+\lambda_i\left(ucs_i\dfrac{\partial Y_i}{\partial w_i}+Y_i\dfrac{\partial ucs_i}{\partial w_i}\right)\right]^2,\\\label{gradients_mod}
    \left|\nabla_{\bm{\lambda}}\mathcal{L}\right|^2 &=& \sum_{i=0}^{n_w}\left(ucs_iY_i\right)^2,\nonumber\\
    \dfrac{\partial\mathcal{L}}{\partial g} &=& \dfrac{1}{2g^2}\sum_{i=0}^{n_w}\lambda_iY_i\sum_{j=1}^{n_q-1}\left(q_{j+1}-q_j\right)\delta\left(\dfrac{1}{2g}\left(q_{j+1}-q_j\right)-\left|w_i-m_j+\epsilon\right|\right).
\end{eqnarray}
Given that $\partial ucs_i/\partial w_i=0$ holds for any $w_i$ value because of $\epsilon\rightarrow 0^+$, $\left|\nabla_{\bm{W}}\mathcal{L}\right|^2$ is simplified as
\begin{equation}\label{gradient_lag}
    \left|\nabla_{\bm{W}}\mathcal{L}\right|^2=\sum_{i=0}^{n_w}\left(\dfrac{\partial C}{\partial w_i}+\lambda_iucs_i\dfrac{\partial Y_i}{\partial w_i}\right)^2.
\end{equation}
The gradient $\partial\mathcal{L}/\partial g$ is non-zero only if a given weight $w_i$ satisfies $\left|w_i-m_j+\epsilon\right|=\dfrac{1}{2g}\left(q_{j+1}-q_j\right)$ The probability that $w_i$ at a given time satisfies the equality for a given $g$ should be very low. Additionally, regarding the discrete change in $g$ in the actual application of the algorithm, the probability is negligible. Thus, this gradient can be ignored hereafter. Therefore, Eq. \eqref{lyapunov_mod2} can be re-expressed as
\begin{equation}\label{app:lyapunov_mod3}
    \dfrac{d\mathcal{L}}{dt} = -\tau_{W}^{-1}\sum_{i=0}^{n_w}\left(\dfrac{\partial C}{\partial w_i}+\lambda_iucs_i\dfrac{\partial Y_i}{\partial w_i}\right)^2 + \tau_{\lambda}^{-1}\sum_{i=0}^{n_w}\left(ucs_iY_i\right)^2.
\end{equation}

Distinguishing the weights belonging to the unconstrained-weight window $D_{ucs}$ from the others at a given time $t$, Eq. \eqref{app:lyapunov_mod3} can be written by
\begin{equation}\label{app:lyapunov_mod4}
    \dfrac{d\mathcal{L}}{dt} = -\tau_{W}^{-1}\sum_{w_i\in D_{ucs}}\left(\dfrac{\partial C}{\partial w_i}\right)^2 -\sum_{w_i\notin D_{ucs}}\left[\tau_{W}^{-1}\left(\dfrac{\partial C}{\partial w_i}+\lambda_i\dfrac{\partial Y_i}{\partial w_i}\right)^2 - \tau_{\lambda}^{-1}Y_i^2\right].
\end{equation}

\newpage
\subsection{Quantization kinetics in the discrete time domain}\label{kinetics_discrete}
\begin{figure}[tb]\centering
    \includegraphics[width=4.5in]{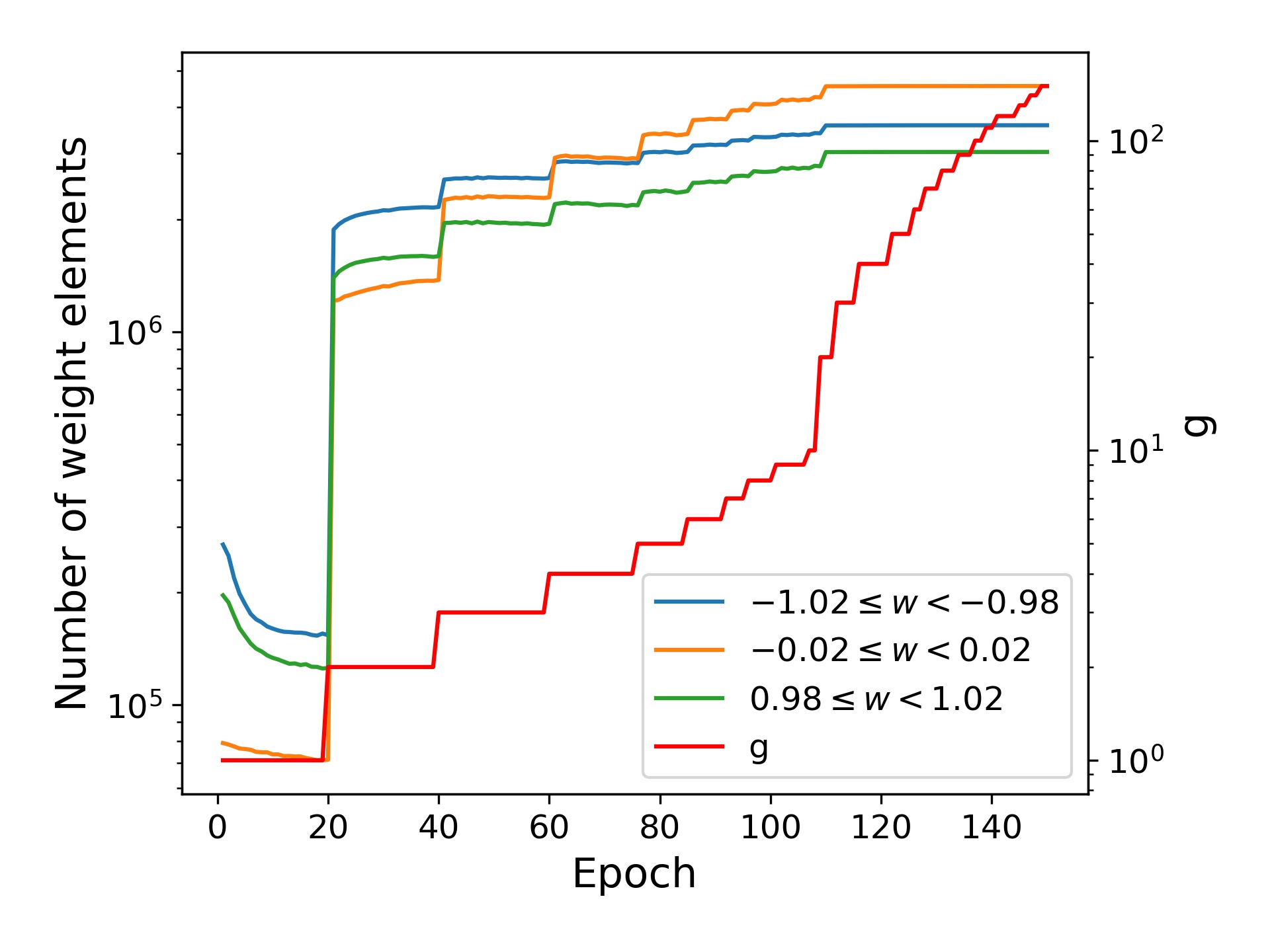}
    \caption{\label{learning_kinetics} Weight-ternarization kinetics of ResNet-18 on ImageNet}
\end{figure}
We monitored the population changes of weights near given quantized weight values for ResNet-18 on ImageNet with ternary-weight constraints. Fig. \ref{learning_kinetics} shows the population changes of weights near -1, 0, and 1 upon the update of the unconstrained-weight window variable $g$. As such, the variable $g$ was updated such that $\Delta g=1$ when $g<10$, and $\Delta g=10$ otherwise. Step-wise increases in populations upon the increase of $g$ are seen, indicating the obvious effect of the unconstrained-weight window on weight-quantization kinetics.  

\subsection{Hyperparameters}\label{app:hyperparameters}
The hyperparameters used are listed in Table~\ref{hyperparameter}. The weight- and multiplier-learning rates are denoted by $\eta_W$ and $\eta_{\lambda}$, respectively. The weight decay rate (L2 regularization) is denoted by $wd$.

\begin{table}[hbt!]
  \caption{Hyperparameters used.}
  \label{hyperparameter}
  \setlength{\tabcolsep}{1pt}
  \centering
  \begin{tabular}{@{\extracolsep{4pt}}ccccccccc}
    \toprule
     \multirow{2}{*}{}& \multicolumn{4}{c}{AlexNet} & \multicolumn{4}{c}{ResNet-18}\\
    \cline{2-5} \cline{6-9} 
     &$\eta_W$ & $\eta_\lambda$ & $wd$ & batch size &$\eta_W$ & $\eta_\lambda$ & $wd$ & batch size\\
    \midrule
    Binary & \multirow{2}{*}{$10^{-3}$}&\multirow{4}{*}{$10^{-4}$}& \multirow{4}{*}{$5\times10^{-4}$}&\multirow{4}{*}{$256$} & \multirow{4}{*}{$10^{-3}$}&\multirow{4}{*}{$10^{-4}$}& \multirow{4}{*}{$10^{-4}$}&\multirow{4}{*}{$256$}\\
    Ternary &  & & & \\
    One-bit shift & \multirow{2}{*}{$10^{-4}$}& & & \\
    Two-bit shift & & & & \\
    \midrule
    \multirow{2}{*}{}& \multicolumn{4}{c}{ResNet-50} & \multicolumn{4}{c}{GoogLeNet}\\
    \cline{2-5} \cline{6-9}
    &$\eta_W$ & $\eta_\lambda$ & $wd$ & batch size &$\eta_W$ & $\eta_\lambda$ & $wd$ & batch size\\
    \midrule
    Binary & \multirow{2}{*}{$10^{-3}$}&\multirow{4}{*}{$10^{-4}$}& \multirow{4}{*}{$10^{-4}$}&\multirow{4}{*}{$128$} &
    \multirow{4}{*}{$10^{-4}$}&\multirow{4}{*}{$10^{-4}$}& \multirow{4}{*}{$10^{-4}$}&\multirow{4}{*}{$256$}\\
    Ternary &  &  &  &  \\
    One-bit shift & \multirow{2}{*}{$10^{-4}$}& & &  \\
    Two-bit shift & & & & \\
    \bottomrule
  \end{tabular}
\end{table}

\subsection{Computational complexity}\label{app:complexity}
CBP is a post-training method so that this number of FLOPs is an additional computational complexity to the pre-training using backprop. 

\#FLOPs for CBP = (\#FLOPs for weight update) + (\#FLOPs for Lagrange multiplier update), where

\#FLOPs for weight update = (\#FLOPs for loss evaluation) + (\#FLOPs for error-backpropagation).

\#FLOPs for loss evaluation = (\#FLOPs for forward propagation) + (\#FLOPs for constraint contribution calculation $\bm{\lambda}^T\mathbf{cs}$). 

The number of FLOPs for the latter scales with the number of parameters in total ($n_w$) because each parameter is given a set of $\lambda$ and $cs$. The number of multiplication $\lambda\times cs_i\left(w_i\right)$ is the same as the number of parameters ($n_w$).The calculation of $cs_i$ for a given $w_i$ involves six FLOPs according to Eqs. (8)-(10). Therefore, 

\#FLOPs for loss evaluation = (\#FLOPs for forward propagation) + 6$n_w$.

As for conventional backprop, the number of FLOPs for weight update (using error-backpropagation) approximately equals the number of FLOPs for forward propagation. Therefore,

\#FLOPs for weight update = 2$\times$(\#FLOPs for forward propagation) + 6$n_w$

The Lagrange multiplier update for each multiplier involves one multiplication ($\eta_{\lambda}\times cs_i$) and one addition ($\lambda_i\gets \lambda_i+\eta_{\lambda}cs_i$), but uses $cs_i$ that has been calculated already when calculating the loss function. Therefore,

\#FLOPs for Lagrange multiplier update = 2$n_w$.

It should be noted that the multiplier is updated merely a few times during the entire training period: less than 20 percent of the training epochs, which is parameterized by $p$.

Therefore, we have

\#FLOPs for CBP = 2(\#FLOPs for forward propagation) + 2$(p+3)n_w$

The number of FLOPs for CBP for three models (for $p=0.2$) is shown below.

\textbf{AlexNet}: \#FLOPs for CBP $\approx$ 1.82G, and \#FLOPs for BP $\approx$ 1.45G (i.e., 25\% increase in \#FLOPs)

\textbf{ResNet18}: \#FLOPs for CBP $\approx$ 3.69G, and \#FLOPs for BP $\approx$ 3.62G (i.e., 2\% increase in \#FLOPs)

\textbf{ResNet50}: \#FLOPs for CBP $\approx$ 7.89G, and \#FLOPs for BP $\approx$ 7.74G (i.e., 2\% increase in \#FLOPs)

\section{Additional Data}\label{app:data}

\subsection{Extra Data}
Processes of learning quantized weights in AlexNet, ResNet-18, ResNet-50, and GoogLeNet are shown in Fig.~\ref{fig3}, \ref{fig4}, \ref{fig5}, and \ref{fig6}, respectively.

\begin{figure}[tb]
    \centering
    \includegraphics[width=5.3in]{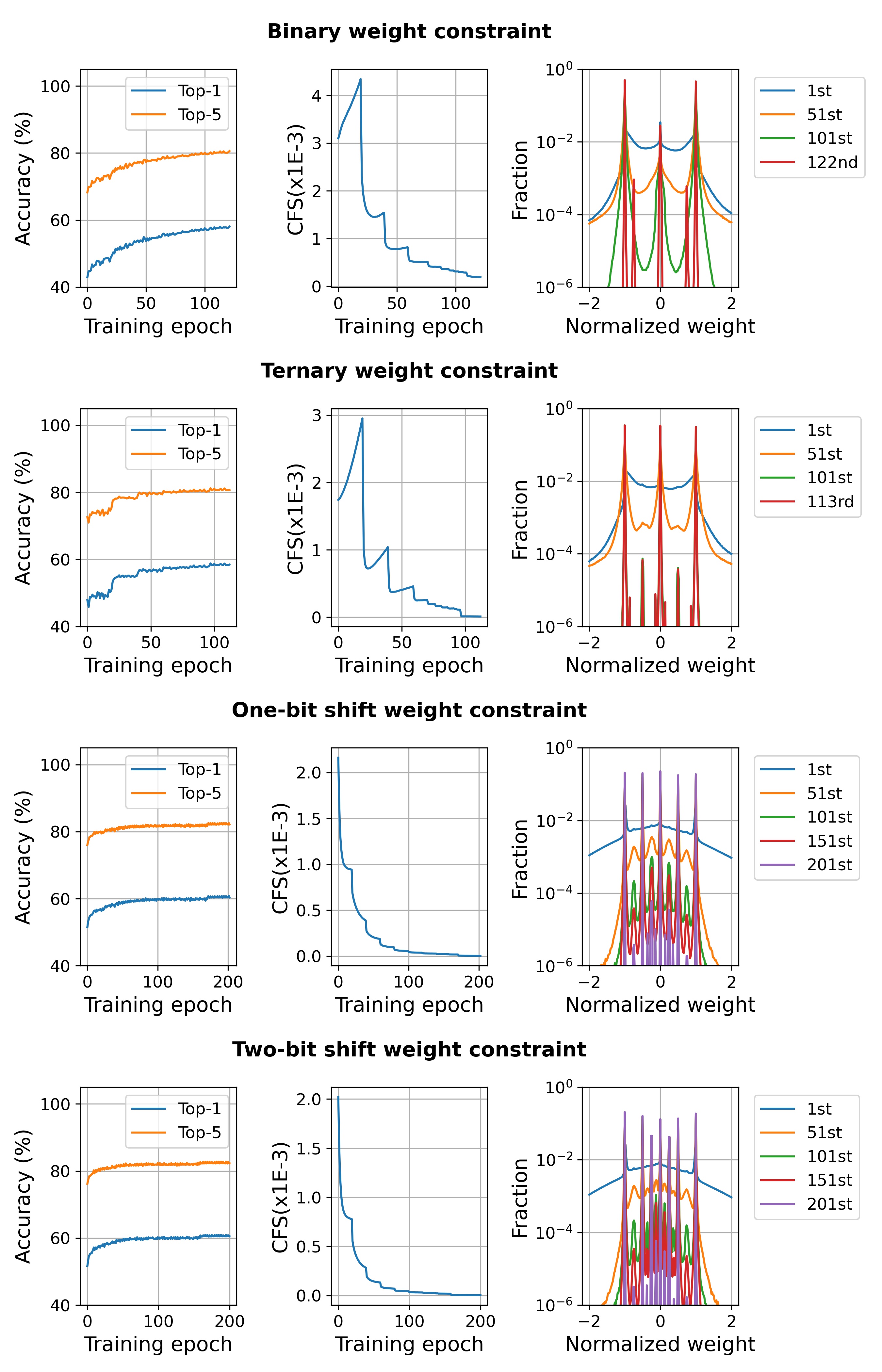}
    \caption{\label{fig3} Learning quantized weights in AlexNet}
\end{figure}
\begin{figure}[tb]
    \centering
    \includegraphics[width=5.3in]{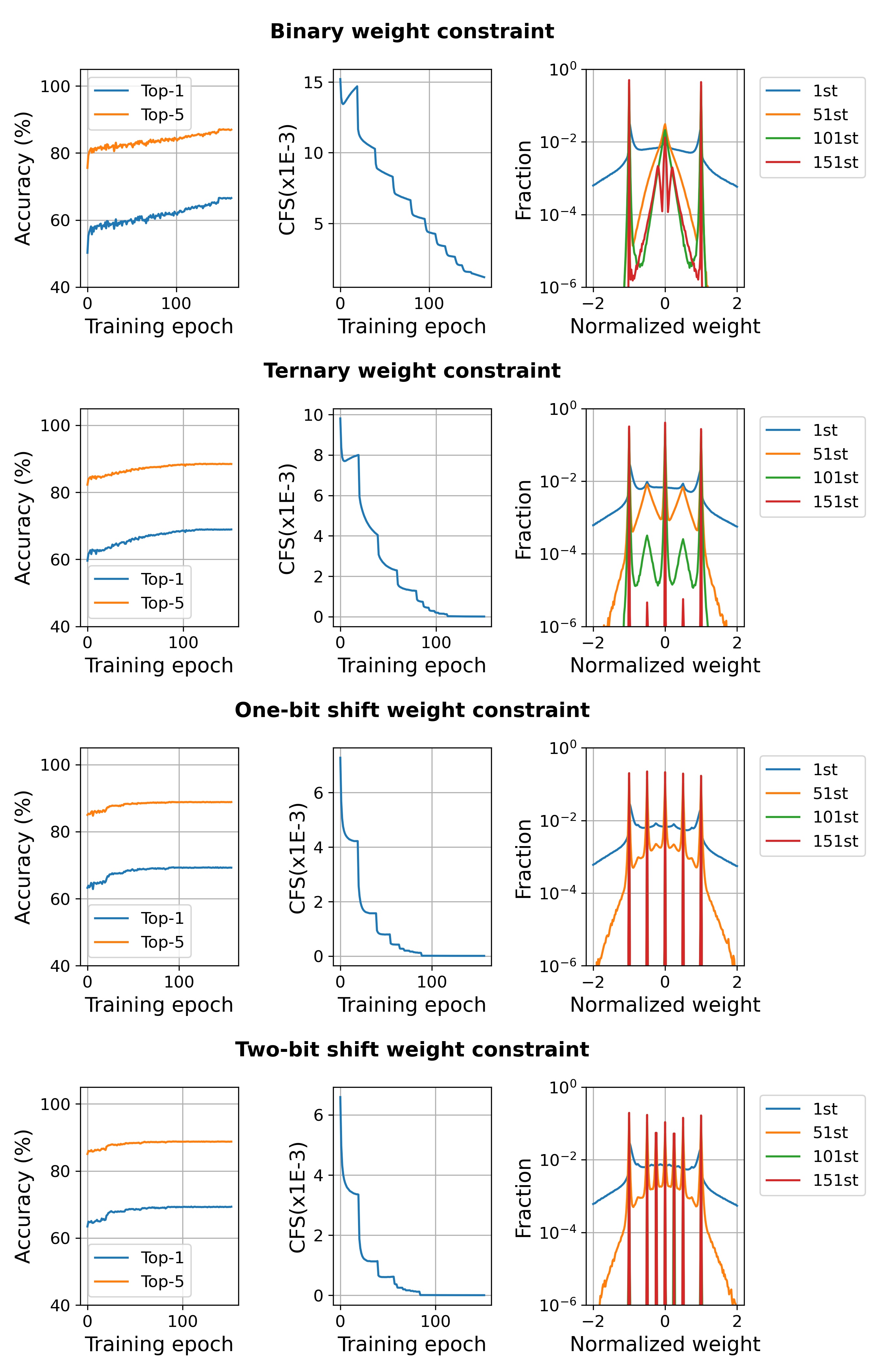}
    \caption{\label{fig4} Learning quantized weights in ResNet-18}
\end{figure}
\begin{figure}[tb]
    \centering
    \includegraphics[width=5.3in]{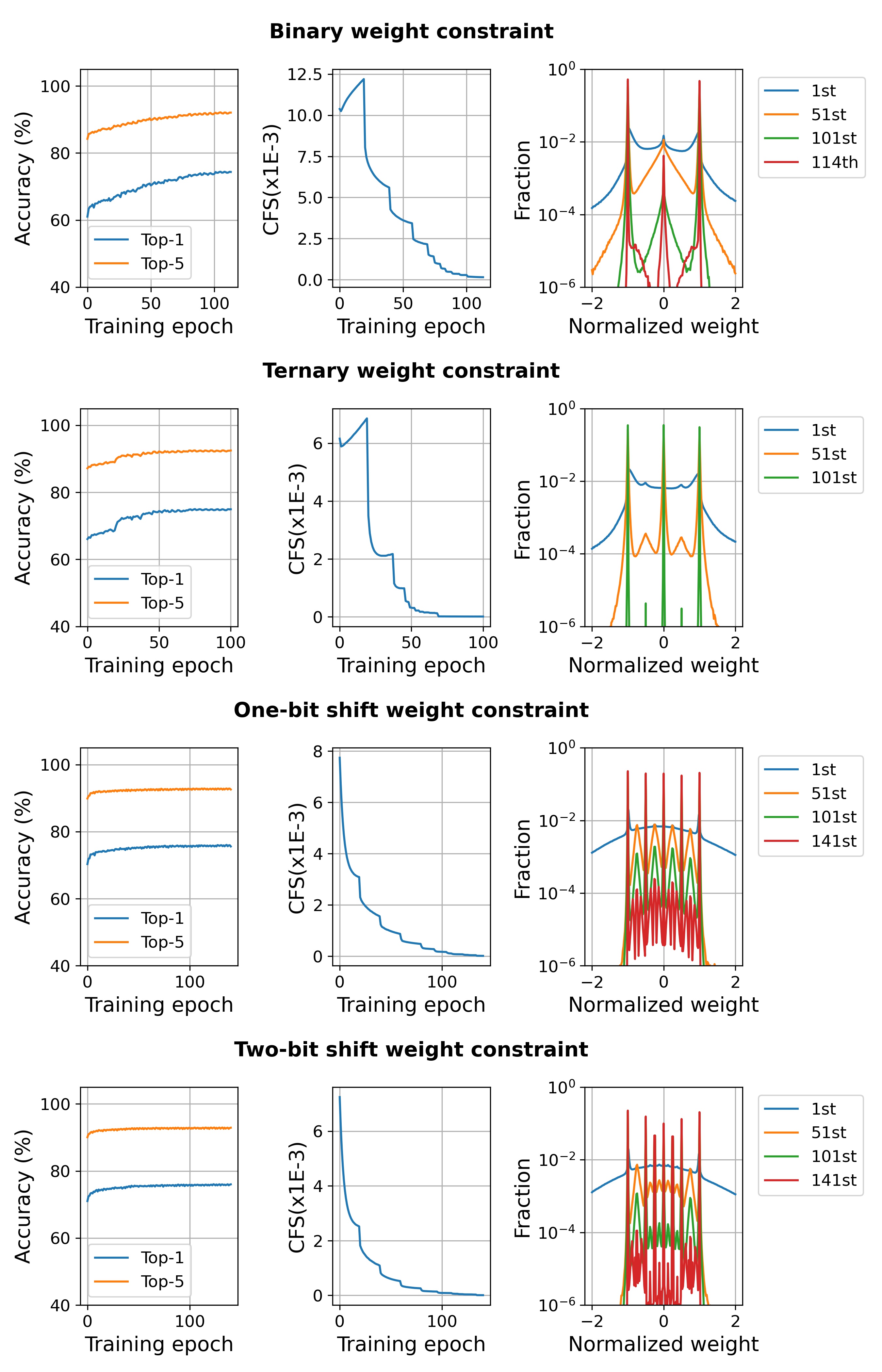}
    \caption{\label{fig5} Learning quantized weights in ResNet-50}
\end{figure}
\begin{figure}[tb]
    \centering
    \includegraphics[width=5.3in]{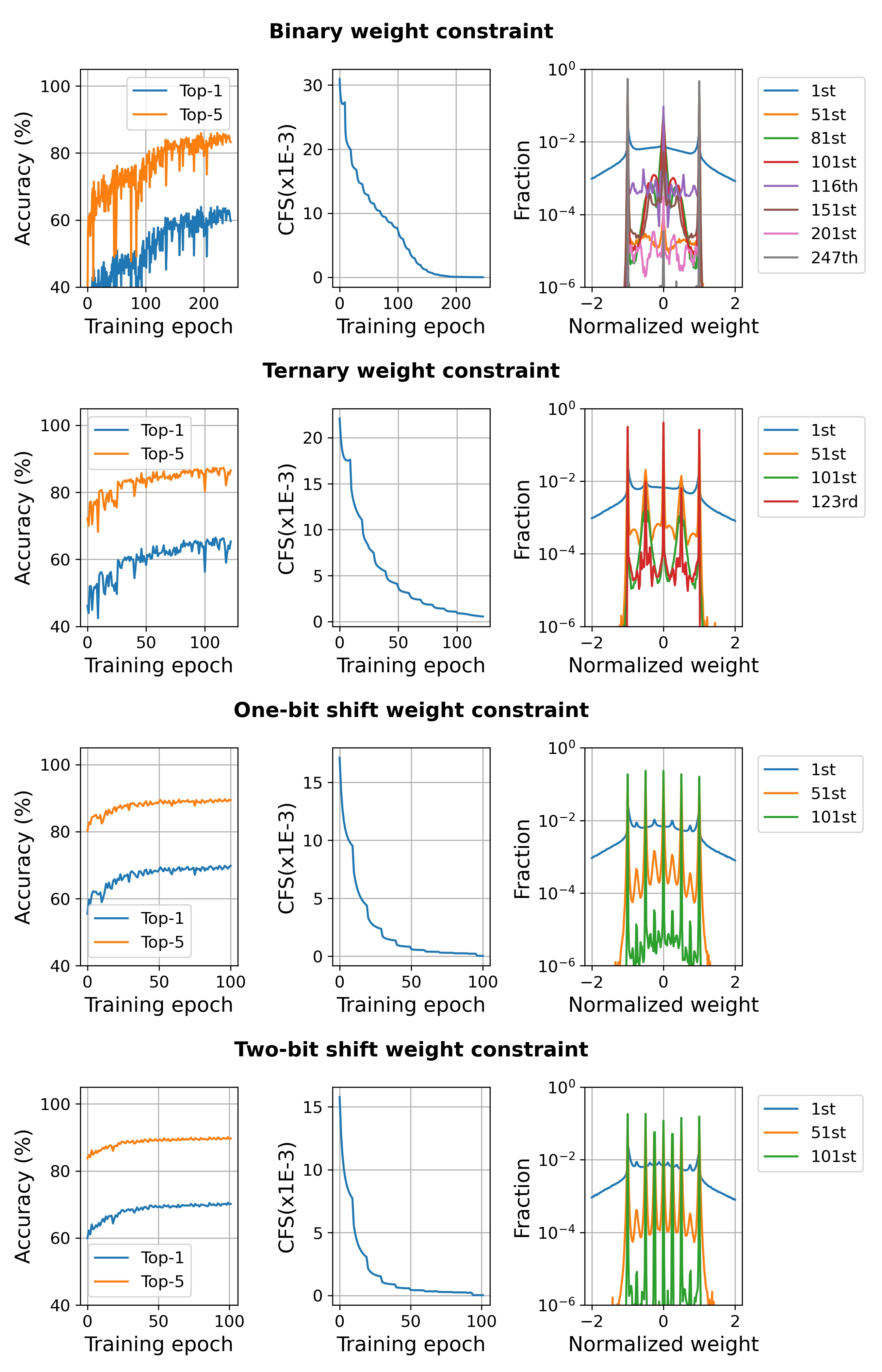}
    \caption{\label{fig6} Learning quantized weights in GoogLeNet}
\end{figure}
\end{document}